\documentclass{article}
\usepackage[utf8]{inputenc} 
\usepackage[T1]{fontenc}    
\usepackage{hyperref}       
\usepackage{url}            
\usepackage{booktabs}       
\usepackage{amsfonts}       
\usepackage{nicefrac}       
\usepackage{microtype}      
\usepackage{xcolor}         
\colorlet{darkgreen}{green!45!black}
\hypersetup{
  colorlinks,
  citecolor=darkgreen}
\usepackage{graphicx}
\usepackage{subfigure}
\usepackage{enumitem}
\usepackage{natbib}


\usepackage{algorithmic}
\usepackage{algorithm}

\usepackage{amsmath}
\usepackage{amssymb}
\usepackage{mathtools}
\usepackage{amsthm}
\usepackage{thmtools, thm-restate}

\usepackage[capitalize,noabbrev]{cleveref}

\theoremstyle{plain}
\declaretheorem{theorem}
\declaretheorem[sibling=theorem, numberwithin=section]{lemma}
\declaretheorem[sibling=theorem, numberwithin=section]{corollary}
\theoremstyle{definition}
\declaretheorem[sibling=theorem, numberwithin=section]{definition}
\declaretheorem[sibling=theorem, numberwithin=section]{assumption}
\declaretheorem[sibling=theorem, numberwithin=section]{proposition}
\newcommand{\defeq}{\overset{\text{\tiny def}}{=}}
\newcommand\indep{\protect\mathpalette{\protect\indepT}{\perp}}
\def\indepT#1#2{\mathrel{\rlap{$#1#2$}\mkern2mu{#1#2}}}

\usepackage[normalem]{ulem}


\usepackage{amsfonts}
\usepackage{xcolor}
\usepackage{bbold}
\usepackage{bigints}
\usepackage{stmaryrd}
\usepackage{booktabs,tabularx}
\usepackage{makecell}
\usepackage[nobiblatex]{xurl}
\newcommand\rurl[1]{%
  \href{http://#1}{\nolinkurl{#1}}%
}

\DeclareMathOperator*{\argmax}{arg\,max}
\DeclareMathOperator*{\argmin}{arg\,min}

\usepackage{tikz}
\usetikzlibrary{backgrounds}
\usetikzlibrary{tikzmark}
\usetikzlibrary{calc}
\usetikzlibrary{arrows,shapes,positioning,shadows,trees,mindmap}
\usetikzlibrary{arrows.meta}

\usepackage{tcolorbox}
\newcommand{\highlight}[2]{\colorbox{#1!17}{$\displaystyle #2$}}

\renewcommand{\highlight}[2]{\colorbox{#1!17}{#2}}

\usepackage[preprint]{neurips_2024}

\title{Teaching Models To Survive:\\Proper Scoring Rule and Stochastic Optimization with Competing Risks}

\author{%
  Julie Alberge \\
  SODA Team, Inria Saclay \\
  Palaiseau, France\\
  \texttt{julie.alberge@inria.fr} \\
  \And
  Vincent Maladière \\
  :probabl. \\
  Paris, France\\
  \texttt{vincent@probabl.ai} \\
  \And
  Olivier Grisel \\
  :probabl. \\
  Paris, France\\
  \AND
  Judith Abécassis \\
  SODA Team, Inria Saclay \\
  Palaiseau, France\\
  \And
  Gaël Varoquaux \\
  SODA Team, Inria Saclay \\
  Palaiseau, France\\
}

\begin{document}

\maketitle

\begin{abstract}
When data are right-censored, \emph{i.e.} some outcomes are missing due to a limited period of observation, survival analysis can compute the ``time to event''. Multiple classes of outcomes lead to a classification variant: predicting the most likely event, known as competing risks, which has been less studied.
To build a loss that estimates outcome probabilities for such settings, we introduce a strictly proper censoring-adjusted separable scoring rule that can be optimized on a subpart of the data because the evaluation is made independently of observations. It enables stochastic optimization for competing risks which we use to train gradient boosting trees. Compared to 11 state-of-the-art models, this model, MultiIncidence, performs best in estimating the probability of outcomes in survival and competing risks. It can predict at any time horizon and is much faster than existing alternatives.
\end{abstract}

\section{Introduction}

We all die at some point; some applications call for predicting not whether an event of interest will happen or not, but when it is likely to occur:  \emph{time-to-event regression}. In such a setting, samples often have unobserved outcomes, \emph{e.g.} individuals that have not been followed long enough for the event of interest to occur. Limiting the analysis to fully observed samples creates a censoring bias; valid models use dedicated corrections for censorship: \emph{survival analysis} models. These have long been central to health \citep{zhu_deep_2016, chaddad_radiomic_2016, gaynor_use_1993}. Nowadays, survival analysis is also used in diverse fields, such as predictive maintenance \citep{rith_analysis_2018, susto_machine_2015}, or user-engagement studies \citep{maystre_temporally-consistent_nodate}. 
Survival analysis has led to many dedicated models, such as the \citet{kaplan_nonparametric_1958} estimator or the \citet{cox_regression_1972} proportional hazard model. 

Competing risks analysis generalizes survival analysis to account for multiple events, determining which will happen first \citep{susto_machine_2015, gaynor_use_1993}. For instance, if a person with breast cancer dies from a different cause, it is impossible to determine when they would have succumbed to cancer, regardless of the duration of the observation period. \citep{seer}. The caregiver may also want to adapt the treatment if the patient is predicted to die of a competing event such as a heart attack sooner than from cancer.
As the risks of the various events are seldom independent--for instance, cancer and cardiovascular disease share inflammation or age risk factors \citep{koene2016shared}--competing risks cannot be solved by running a survival model for each event \citep{wolbers2009prognostic}. The estimated risk of a single event of interest will be biased if competing risks are not included. Hence, adequate models for these risks are critical for decision-making~\citep{ramspek2022lessons,koller2012competing,van2016competing}.

Survival models have traditionally been developed with ad hoc adjustments for censoring. The most common approach is to design a likelihood using the probability of censoring per unit time--\emph{i.e.} the time-derivative of the risk--which either comes with strong parametric assumptions \citep{cox_regression_1972} or ad hoc corrections \citep{wang_survtrace_2022}. Given that the risk, which is the probability of the outcome at a specific time, is crucial for various applications, it can be preferable to use losses that directly control probabilities (proper scoring rules), as developed by \citet{graf_assessment_1999,rindt_survival_2022}. However, no metric (or loss) has been shown to control probabilities in the competing risks setting.

In application domains typical of survival analysis and competing risks --health, predictive maintenance, insurance, marketing-- the data are typically tabular with categorical variables, where tree-based models shine \citep{grinsztajn_why_2022}.  Existing survival and competing risks models do not fit well with these requirements. In particular, the proper scoring rule introduced by \citet{rindt_survival_2022} requires a time derivative of the risk, typically via an auto-diff operator in a neural architecture. This approach is challenging to adapt to tree-based algorithms. In addition, the ever-growing volume of data calls for computationally efficient algorithms.

\paragraph{Contributions}
Here, we provide a general theoretical framework to learn a competing risks model with a proper scoring rule.  This scoring rule gives a loss easy to plug into any multiclass estimator to create a competing risks model: giving the individual risk of each event at any horizon. 
We also sum over time for model evaluation, as the resulting Integrated Scoring Rule is also proper.\\ 
 An interesting property of this new loss is that it can be optimized on a subset of the training data because the evaluation is made independently of observations. Hence, it allows stochastic optimization, enabling computationally efficient learning.
With that, we propose an algorithm called MultiIncidence, based on Stochastic Gradient Boosting Trees. 
We benchmark our algorithm on a synthetic dataset with varying censoring rates, number of features, and number of training samples to show that our method outperforms state-of-the-art methods while exhibiting faster training times. Finally, applying our model to real-life datasets demonstrates that it outperforms other models in both the competing risks context and basic survival analysis.

\section{Related work}
\paragraph{Survival settings}
Various survival models have been developed, ranging from approaches like the \citet{kaplan_nonparametric_1958} estimator, estimating the general survival curve of a whole population, to models that account for covariates. One of them is the \citet{cox_regression_1972} Proportional-Hazards Model, a linear model of \emph{hazard}: the instantaneous probability of an event, \emph{i.e.} the logarithmic derivative of outcome probabilities in time. More complex models have been adapted to the survival setting: Support Vector Machines \citep{van_belle_support_2011}, survival games \citep{han2021inverse} and neural networks with DeepSurv \citep{katzman_deepsurv_2018} or PCHazard \citep{kvamme_continuous_2019}. Although the above do not control risks,  more recent neural networks use adequate losses (see below): DQS \citep[though relying on a piecewise constant hazard]{yanagisawa2023proper}, SumoNet \citep[which requires differentiable models]{rindt_survival_2022}.

\paragraph{Competing risks}
Competing risks, with multiple outcomes, require new methods (which can naturally adapt to the simpler survival setting).
Derived from the \citet{kaplan_nonparametric_1958} estimator, the \citet{nelson_theory_1972}-\citet{aalen_survival_2008} estimator is an unbiased marginal model for competing risks. \\
The linear \citet{fine_proportional_1999} estimator is inspired by the \citet{cox_regression_1972} estimator in survival analysis and is the most used model in clinical research.
Machine-learning models have recently been adapted to the competing risks setting, including tree-based approaches such as the Random Survival Forests \citep{ishwaran_random_2008, kretowska_tree-based_2018, bellot_tree-based_2018}, boosting approaches \citep{bellot_multitask_2018}, neural networks approaches \textit{e.g.} DeepHit and Gaussian mixtures approaches \citep{lee_deephit_2018, aala_deep_2017, danks_derivative-based_2022, nagpal2021deep}  and transformers approaches with SurvTRACE \citep{wang_survtrace_2022} using a loss corrected to predict rare competing events but independently forecasts all events without ensuring probabilities sum to one.\\
For a review of the competing setting, the reader can refer to \citet{monterrubio-gomez_review_2022}.

\paragraph{Evaluation for such models}
Prediction evaluation in survival or competing risks settings calls for adapted metrics to account for right-censored points \citep{harrell}, like the C-index which adapts the Area Under the ROC curve in classification. However, the C-index only evaluates the ranking of samples, \emph{i.e.} which samples will undergo the event of interest first, and is dependent on the censoring distribution which may bias the evaluation~\citep{blanche_c-index_2019, rindt_survival_2022}. In fact, the score may be higher for distributions other than oracle-censoring distributions. Alternative methods have been proposed such as the \emph{time-dependent} C-index, $C_\zeta$ \citep{antolini_timedependent_2005}, which is the same metric but computed at a given time horizon $\zeta$. The C-index ranking metric has been extended to competing risks~\citep{uno_cstatistics_2011} but, as in the survival setting, the C-index only evaluates relative risks for pairs of individuals and not the absolute value of the risk for a given individual. 
Other time-dependent adaptations of the ROC curve have been developed, also assessing a discriminative power rather than risks or probabilities~\citep{blanche2013estimating}.
And yet control of the risk is crucial to decision making \citep{van2019calibration}. Proper scoring rules are alternatives to overcome the limitations of existing metrics because they capture more aspects of the problem. In addition, they can be used for both the training and evaluation of probabilistic predictive models.

\paragraph{Proper Scoring Rules (PSR)}
Scoring rules are functions of observations and a candidate probability distribution; when \emph{proper} they control for the oracle probability distribution (definition \ref{def:proper}).
This is important in machine learning to create losses that recover probabilities of outcomes.
For classification, where discrete events are observed rather than the probability, the Brier score and the log loss give proper scoring rules, with relative merits \citep{benedetti_scoring_2010,merkle_choosing_2013}. 

\citet{graf_assessment_1999} adapt the Brier score to survival analysis, with a strong independence assumption on the censoring distribution. However, the assumption can easily be violated \citep{kvamme_brier_2019} which leads to bias \citep{rindt_survival_2022}.
\citet{rindt_survival_2022} show that the likelihood of the survival function leads to a proper scoring rule but requires obtaining the density function and the survival function, a time-wise derivative of outcome probabilities (definition \ref{defs}).  For quantile regression, \citet{yanagisawa2023proper} shows that the Pinball loss may lead to a proper scoring rule for survival analysis but requires an oracle parameter.
\citet{han2021inverse} introduces a double optimization problem for which the stationary point is located at the true distributions.

For competing risks, \citet{schoop_quantifying_2011} extend the Brier score to a proper scoring rule. However, the Brier score does not measure the uncertainty as well as the log loss \citep{benedetti_scoring_2010}. 

\section{Problem Formulation}
\paragraph{Notations}
We write oracle quantities as $a^*$ and estimates as $\hat{a}$, vectors in bold, $\mathbf{a}$, random variables in upper case, $A$, observations in lower cases $a$, and distributions in calligraphy style $\mathcal{A}$.
\subsection{Problem setting}

We consider $K$ competing events and for $k \in \llbracket 1, K \rrbracket$, we denote $T^*_k \in \mathbb{R}_+$ the event time of the event $k$, depending on the covariate $\mathbf{X} \in \mathcal{X}$. We also denote $T^* \in \mathbb{R}_+, T^* = \min\limits_{k \in \llbracket 1, K \rrbracket}(T^*_k)$ and $\Delta^* \in \llbracket1, K \rrbracket, \Delta^* = \argmin\limits_{k \in \llbracket 1, K \rrbracket}(T^*_k)$.
We observe $(\mathbf{X}, T, \Delta) \sim \mathcal{D}$, with $T = \min(T^*, C)$ where $C \in \mathbb{R}_+$ is the censoring time, which may depend on $\mathbf{X}$ and $\Delta \in \llbracket0, K \rrbracket, \Delta= \argmin\limits_{k \in \llbracket 0, K \rrbracket}(T^*_k) $ where 0 denotes a censored observation. 
However, we are interested in the distribution of the uncensored data~$(\mathbf{X}, T^*, \Delta^*) \sim \mathcal{D}^*$ especially in the joint distribution of $T^*, \Delta^*|\mathbf{X} =\mathbf{x}$ and the marginal distribution of $T^*|\mathbf{X}= \mathbf{x}$. \\
This paper aims to predict an unbiased estimate of all of the cause-specific Cumulative Incidence functions (CIF) at any time horizon $\zeta$ chosen based on the observations $(\mathbf{x}, t, \delta)$:
\begin{definition}[\emph{Quantities of interest}]\label{defs}
    \begin{flalign*}%
    \text{\footnotesize{CIF} \small (cumulative incidence
    function):}&& F^*(\zeta|\mathbf{x}) = \mathbb{P}(T^* \leq \zeta
|\mathbf{X}= \mathbf{x}) &&\\[.1em]
    \text{\footnotesize{$k^{th}$ CIF}:}&& F^*_k(\zeta|\mathbf{x}) = \mathbb{P}(T^* \leq \zeta \cap \Delta^* = k |\mathbf{X}= \mathbf{x})  && \\[.1em]
    \text{\footnotesize{Censoring}:}&& G^*(\zeta|\mathbf{x}) = \mathbb{P}(C> \zeta |\mathbf{X}= \mathbf{x})  &&\\[.1em]
    \text{\footnotesize{Survival to any event}:}&& S^*(\zeta|\mathbf{x}) = \mathbb{P}(T^* > \zeta |\mathbf{X}= \mathbf{x}) && 
\end{flalign*}
\end{definition}%
\begin{assumption}[\emph{Non informative censoring}]\label{info_censoring}
     We make the classic assumption of survival analysis that the censoring is noninformative according to the covariates: $$\forall k, \in \llbracket1, K\rrbracket, ~~ T^*_k \indep C |\mathbf{X}$$
\end{assumption} 

Assumption~\ref{info_censoring} needed for most theoretical results in survival \citep{rindt_survival_2022, yanagisawa2023proper, han2021inverse}. It is key to understanding why single-event survival analysis is invalid in the presence of competing risks: if some observations are censored due to other events sharing unobserved risk factors with the event of interest, this assumption is violated.

\subsection{CIF scoring rule}
\paragraph{Proper Scoring Rule }
A scoring rule $\ell$ evaluates a distribution $\mathcal{P}$ on an observation $Y$ and gives a corresponding score $\ell(\mathcal{P}, Y)$. The better the score, the better the model fits the observation. For a proper scoring rule, it corresponds to the degree to which the model can predict the oracle distribution
\citep[more on scoring rules in][]{gneiting_strictly_2007, ovcharov_proper_2018, merkle_choosing_2013}. 
\begin{definition}[\emph{Proper Scoring Rule}]
A scoring rule $\ell$ is proper if
$$
\forall \mathcal{P}, \mathcal{Q}, \text{distributions} \qquad 
\mathbb{E}_{Y\sim \mathcal{Q}}[\ell(\mathcal{P}, Y)] \leq \mathbb{E}_{Y\sim \mathcal{Q}}[\ell(\mathcal{Q}, Y)]
$$
When equality is reached if and only if $\mathcal{P} = \mathcal{Q}$, the scoring rule is called strictly proper.

\label{def:proper}%
\end{definition} 

\paragraph{Proper scoring rule for the Global CIF}
We will denote $L_{\zeta}$, a scoring rule for the global CIF at a time horizon $\zeta$. 
\begin{definition}[\emph{PSR for competing risks settings}]\label{def:psrcr}
In competing events settings, as we face censoring, a scoring rule $L_{\zeta}$ for the CIF at time $\zeta$ for an observation $(\mathbf{X}, T, \Delta)$ is proper if and only if:
\begin{multline}
   \forall \zeta, (\mathbf{X}, T, \Delta )\sim \mathcal{D}, \\
    \mathbb{E}_{T, \Delta  | \mathbf{X}= \mathbf{x}}
    [L_{\zeta}(
    \tikzmarknode{estimands}{\highlight{pink}
        {$(\hat{F}_1(\zeta| \mathbf{x}), ..., \hat{F}_K(\zeta| \mathbf{x}), \hat{S}(\zeta| \mathbf{x})), (T, \Delta)$}
        }
    )]  
        \leq \quad \\ 
    \mathbb{E}_{T, \Delta  | \mathbf{X}= \mathbf{x}}[L_{\zeta}(\tikzmarknode{oracle}{\highlight{purple}{$(F^*_1(\zeta| \mathbf{x}), ..., F^*_K(\zeta| \mathbf{x}), S^*(\zeta| \mathbf{x}))$}, (T, \Delta)})] 
\end{multline}
\begin{tikzpicture}[overlay,remember picture,>=stealth,nodes={align=left,inner ysep=1pt},<-]
     \path (estimands.south) ++ (15em,3em) node[anchor=east,color=pink!200] (estititle){\text{Estimated distributions}};
     \draw [color=pink!200](estimands.north) -- ([xshift=-0.1ex,color=pink!200]estititle.west);
    \path (oracle.south) ++ (-3em,-1em) node[anchor=east,color=purple!67] (oracletitle){\text{Oracle distributions}};
     \draw [color=purple!87](oracle.south) -- ([xshift=-0.1ex,color=purple!87]oracletitle.east);
\end{tikzpicture}
\end{definition}

\section{A Proper Scoring Rule for Competing Risks}

We prove that the negative log-likelihood re-weighted by the censoring distribution (IPCW) is proper.

\begin{definition}[Competitive Weights Negative LogLoss]
    We introduce the multiclass negative log-likelihood re-weighted with the censoring distribution. The different classes represent the loss of all the cumulative incidence functions as well as the survival function.
    \begin{multline}
    \quad\forall \zeta, (\mathbf{x}, t, \delta) \sim \mathcal{D}, \quad \mathrm{L}_{\zeta}((\hat{F}_1(\zeta| \mathbf{x}), ..., \hat{F}_K(\zeta| \mathbf{x}), \hat{S}(\zeta| \mathbf{x})), (t, \delta)) \defeq \\
    \frac{1}{n} \sum_{i=1}^n \sum_{k=1}^{K} \left(
    \dfrac{
        \mathbb{1}_{t_i \leq \zeta, \delta_i = k} ~~\log\left(\hat{F}_k(\zeta|\mathbf{x}_i)\right)
        }
        {
        \tikzmarknode{censure_i}{\highlight{orange}
            {$G^*(t_i|\mathbf{x}_i) $}
            }} \right)
        +
        \dfrac{
        \mathbb{1}_{t_i > \zeta} ~~ \log\left(\hat{S}(\zeta|\mathbf{x}_i)\right)
        }
        {
        \tikzmarknode{censure_t}{\highlight{cyan}
            {$G^*(\zeta|\mathbf{x}_i) $}
        }}
    \label{eqn:full_loss}
\end{multline}
\begin{tikzpicture}[overlay,remember picture,>=stealth,nodes={align=left,inner ysep=1pt},<-]
     \path (censure_i.south) ++ (-5em, -1.5em) node[anchor=east,color=orange!67] (titlecensi){\text{Probability of remaining at $t_i$}};
     \draw [color=orange!87](censure_i.west) -- ([xshift=-0.1ex,color=orange]titlecensi.east);
    \path (censure_t.south) ++ (-3em,-1.5em) node[anchor=east,color=cyan!67] (censi){\text{\parbox{25ex}{Probability  of remaining at $\zeta$ \\ \small (1 - probability of  censoring)}}};
     \draw [color=cyan!87](censure_t.west) -- ([xshift=-0.1ex,color=cyan]censi.east);
\end{tikzpicture}
\end{definition}
\vspace{1.2em}

Eqn.\ref{eqn:full_loss} can be seen as a standard log-loss (a.k.a cross-entropy), reweighted by appropriate sample weights, the inverse probabilities, IPCW (inverse probabilities of censoring weights). It can thus be easily added to most multiclass estimators.
\smallskip

\begin{restatable}{lemma}{usefullemma}
\label{lem:usefulllemma}%
Accounting for the time horizon $\zeta$, the expectation of the above scoring rule can be written as: 
    $\quad\forall \zeta, (\mathbf{X}, T, \Delta) \sim \mathcal{D},$
    \begin{equation}
        \mathbb{E}_{T, \Delta |\mathbf{X} =\mathbf{x}}\left[\mathrm{L}_{\zeta}\left(\hat{F}_k(\zeta|\mathbf{x}), (T, \Delta)\right)\right] = 
    \sum_{k=1}^K \log\left(\hat{F}_k(\zeta|\mathbf{x})\right)
        F^*_k(\zeta |\mathbf{x})  
     + \log\left(\hat{S}(\zeta|\mathbf{x})\right)
        S^*(\zeta| \mathbf{x})
        \label{eqn:loss}
    \end{equation}
\end{restatable}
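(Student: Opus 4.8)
The plan is to expand the conditional expectation term by term and show that the inverse‑probability‑of‑censoring weights exactly cancel the probability that an uncensored event is actually observed, so that what survives is the oracle CIF (resp.\ survival) value times the candidate log‑term. Since \eqref{eqn:full_loss} is an average over $n$ contributions that are, conditionally on $\mathbf{X}=\mathbf{x}$, identically distributed, linearity of expectation reduces the statement to computing $\mathbb{E}_{T,\Delta\,|\,\mathbf{X}=\mathbf{x}}$ of a single summand, which is exactly what \eqref{eqn:loss} records; I will work with one observation throughout, noting that $\hat F_k(\zeta|\mathbf{x})$, $\hat S(\zeta|\mathbf{x})$ and all the $G^*$ factors are deterministic once $\mathbf{x}$ is fixed and therefore pull out of the expectation.

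The first step is to rewrite the two indicators of a single contribution in terms of the latent triple $(T^*,\Delta^*,C)$, using $T=\min(T^*,C)$ and the convention $T^*_0=C$ in $\Delta=\argmin_{k\in\llbracket 0,K\rrbracket}T^*_k$. On $\{\Delta=k\}$ with $k\ge 1$ one has $T=T^*$, $\Delta^*=k$ and $C>T^*$, hence $\mathbb{1}_{t\le\zeta,\delta=k}$ becomes $\mathbb{1}_{T^*\le\zeta,\,\Delta^*=k,\,C>T^*}$ (and the case $\delta=0$ contributes nothing to the first sum). For the survival term, $T>\zeta$ is equivalent to $\{T^*>\zeta\}\cap\{C>\zeta\}$, so $\mathbb{1}_{t>\zeta}=\mathbb{1}_{T^*>\zeta}\,\mathbb{1}_{C>\zeta}$.

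The second step is to take expectations using the tower property together with Assumption~\ref{info_censoring} in the form $C\indep(T^*,\Delta^*)\,|\,\mathbf{X}$ (the intended strengthening of the pairwise statement, which follows from $C\indep(T^*_1,\dots,T^*_K)\,|\,\mathbf{X}$), and the implicit positivity $G^*(\cdot|\mathbf{x})>0$ on the support of $T^*$ that makes the IPCW weights well defined. Conditioning the CIF term on $(T^*,\Delta^*,\mathbf{X})$ gives $\mathbb{E}[\mathbb{1}_{C>T^*}\mid T^*,\Delta^*,\mathbf{X}]=G^*(T^*|\mathbf{X})$, whence
\[
\mathbb{E}_{T,\Delta\,|\,\mathbf{x}}\!\left[\frac{\mathbb{1}_{T\le\zeta,\Delta=k}}{G^*(T|\mathbf{x})}\right]
=\mathbb{E}\!\left[\frac{\mathbb{1}_{T^*\le\zeta,\,\Delta^*=k}}{G^*(T^*|\mathbf{x})}\,G^*(T^*|\mathbf{x})\right]
=F^*_k(\zeta|\mathbf{x}).
\]
Likewise, conditioning the survival term on $(T^*,\mathbf{X})$ and using $C\indep T^*\,|\,\mathbf{X}$,
\[
\mathbb{E}_{T,\Delta\,|\,\mathbf{x}}\!\left[\frac{\mathbb{1}_{T>\zeta}}{G^*(\zeta|\mathbf{x})}\right]
=\frac{\mathbb{P}(T^*>\zeta|\mathbf{x})\,\mathbb{P}(C>\zeta|\mathbf{x})}{G^*(\zeta|\mathbf{x})}
=S^*(\zeta|\mathbf{x}).
\]
Multiplying these by $\log\hat F_k(\zeta|\mathbf{x})$ and $\log\hat S(\zeta|\mathbf{x})$ respectively and summing over $k$ yields \eqref{eqn:loss}.

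The main obstacle — really the only subtle point — is the bookkeeping that converts the observed $(T,\Delta)$ into the latent variables $(T^*,\Delta^*,C)$ and then legitimately extracts $G^*(T|\mathbf{x})=G^*(T^*|\mathbf{x})$ from the expectation via the tower property and the conditional independence of Assumption~\ref{info_censoring}; one must simultaneously check that the weights are well defined by invoking positivity of $G^*$ on the relevant time range, and keep track that censored observations ($\Delta=0$) do not enter the cause‑specific sums. Everything else is linearity of expectation and substitution.
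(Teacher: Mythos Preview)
Your proof is correct and follows essentially the same approach as the paper: rewrite the observed-data indicators in terms of the latent triple $(T^*,\Delta^*,C)$, use the conditional independence of Assumption~\ref{info_censoring} so that the IPCW denominators cancel the probability of actually observing the event, and sum the resulting oracle CIF and survival probabilities against their log-terms. If anything, your version is more explicit than the paper's about the tower-property step, the identification $G^*(T|\mathbf{x})=G^*(T^*|\mathbf{x})$ on $\{\Delta=k\}$, and the positivity needed for the weights to be well defined.
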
%
\begin{proof}[Proof sketch]
    The weights enable moving from the observation distribution $T$ to the distribution of $T^*$, a key ingredient to show properness. The whole proof can be found in Appendix \ref{prooflemma}.
\end{proof}%
\begin{restatable}[Properness of the scoring rule]{theorem}{bigthm}\label{thm:bigtheorem}
    Under the assumption that the weights are well chosen, 
    $L_{\zeta}:  \mathbb{R}^{K+1} \times \mathcal{D} \rightarrow \mathbb{R}$ is a strictly proper scoring rule for the global CIF on a fixed time horizon $\zeta \in \mathbb{R}_+$.
\end{restatable}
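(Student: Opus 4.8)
The plan is to reduce the theorem to the classical fact that, over probability vectors, cross-entropy is minimised at the true distribution (Gibbs' inequality). First I would fix a covariate value $\mathbf{x}$ and apply Lemma~\ref{lem:usefulllemma}. Under Assumption~\ref{info_censoring} and the hypothesis that the IPCW weights are well chosen — i.e.\ $G^*(\cdot|\mathbf{x})$ is the true censoring survival function and is bounded away from $0$ on $[0,\zeta]$, so the weights are finite and all the expectations below exist — the lemma gives, for the per-observation score (take $n=1$ in Eqn.~\ref{eqn:full_loss}),
\[
\mathbb{E}_{T,\Delta|\mathbf{X}=\mathbf{x}}\bigl[\mathrm{L}_{\zeta}\bigr]
= \sum_{k=1}^{K} F^*_k(\zeta|\mathbf{x})\,\log\hat F_k(\zeta|\mathbf{x})
\;+\; S^*(\zeta|\mathbf{x})\,\log\hat S(\zeta|\mathbf{x}).
\]

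Next I would note that the $K+1$ events $\{T^*\le\zeta,\ \Delta^*=k\}$ for $k=1,\dots,K$, together with $\{T^*>\zeta\}$, form — conditionally on $\mathbf{X}=\mathbf{x}$ — a partition of the probability space; hence $p^*\defeq\bigl(F^*_1(\zeta|\mathbf{x}),\dots,F^*_K(\zeta|\mathbf{x}),S^*(\zeta|\mathbf{x})\bigr)$ has nonnegative entries summing to one, i.e.\ it is a categorical distribution on $K+1$ outcomes. By construction of a scoring rule for the global CIF, the candidate $\hat p\defeq\bigl(\hat F_1(\zeta|\mathbf{x}),\dots,\hat F_K(\zeta|\mathbf{x}),\hat S(\zeta|\mathbf{x})\bigr)$ is likewise constrained to the $(K+1)$-simplex (the consistency constraint $\sum_k\hat F_k(\zeta|\mathbf{x})+\hat S(\zeta|\mathbf{x})=1$), with $\hat p_j>0$ whenever $p^*_j>0$ so the logarithms are real-valued (using the convention $0\log 0=0$). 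Then the displayed expectation is exactly the negative cross-entropy $\sum_{j=1}^{K+1} p^*_j\log\hat p_j = \sum_{j} p^*_j\log p^*_j - \mathrm{KL}(p^*\,\Vert\,\hat p)$.

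I would then finish with Gibbs' inequality: $\mathrm{KL}(p^*\,\Vert\,\hat p)\ge 0$, with equality if and only if $\hat p=p^*$, which follows from the strict concavity of $\log$ via Jensen's inequality. Hence
\[
\mathbb{E}_{T,\Delta|\mathbf{X}=\mathbf{x}}\bigl[\mathrm{L}_{\zeta}(\hat p,(T,\Delta))\bigr]\;\le\;\mathbb{E}_{T,\Delta|\mathbf{X}=\mathbf{x}}\bigl[\mathrm{L}_{\zeta}(p^*,(T,\Delta))\bigr],
\]
with equality exactly when $\hat F_k(\zeta|\mathbf{x})=F^*_k(\zeta|\mathbf{x})$ for every $k$ and $\hat S(\zeta|\mathbf{x})=S^*(\zeta|\mathbf{x})$. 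Since $\mathbf{x}$ was arbitrary, this is precisely the strict-properness condition of Definitions~\ref{def:proper} and~\ref{def:psrcr}, which proves the theorem.

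The main obstacle is not this final step, which is the textbook cross-entropy/Gibbs argument, but everything packaged inside Lemma~\ref{lem:usefulllemma}: the change of measure from the observed law of $(T,\Delta)$ to the uncensored law of $(T^*,\Delta^*)$ implemented by the IPCW weights, which is where Assumption~\ref{info_censoring} and the correctness and positivity of $G^*$ are genuinely used. A secondary point that needs to be stated carefully is the domain of the scoring rule: properness is phrased for "distributions," so one must make explicit that candidates range over the $(K+1)$-simplex (nonnegative, summing to one) with the support condition $p^*_j>0\Rightarrow\hat p_j>0$ — otherwise the expected score has no maximiser over unconstrained $\mathbb{R}^{K+1}$ and the "strict" claim would be vacuous.
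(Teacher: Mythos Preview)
Your proof is correct and follows the same overall line as the paper: invoke Lemma~\ref{lem:usefulllemma} to reduce the expected score to the cross-entropy between the true categorical distribution $p^*=(F^*_1,\dots,F^*_K,S^*)$ and the candidate $\hat p$ on the $(K{+}1)$-simplex, then argue that this is uniquely optimised at $\hat p=p^*$. The only difference is in how that last optimisation is carried out: the paper sets up the constrained problem and solves it with a Lagrange multiplier, obtaining $\hat p_k=p^*_k$ at the stationary point, whereas you appeal directly to Gibbs' inequality via the decomposition $\sum_j p^*_j\log\hat p_j=\sum_j p^*_j\log p^*_j-\mathrm{KL}(p^*\Vert\hat p)$. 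Your route is slightly cleaner for establishing \emph{strict} properness, since nonnegativity of $\mathrm{KL}$ with equality iff $\hat p=p^*$ gives the strict inequality in one stroke, while the Lagrangian calculation identifies only a stationary point and tacitly relies on concavity to conclude it is the unique maximiser. Your explicit remarks on the simplex domain and the support condition $p^*_j>0\Rightarrow\hat p_j>0$ are also more careful than the paper's treatment.
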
%
\begin{proof}[Proof sketch]
    With the previous result, the properties of the negative log-likelihood, and the Definition \ref{def:psrcr}, we obtain that the loss is strictly proper. The whole proof can be found in Appendix \ref{psrweights}.
\end{proof}

\section{MultiIncidence Model: Gradient boosting for competing risks}

\begin{figure*}[b]
\centerline{\includegraphics[width=\textwidth]{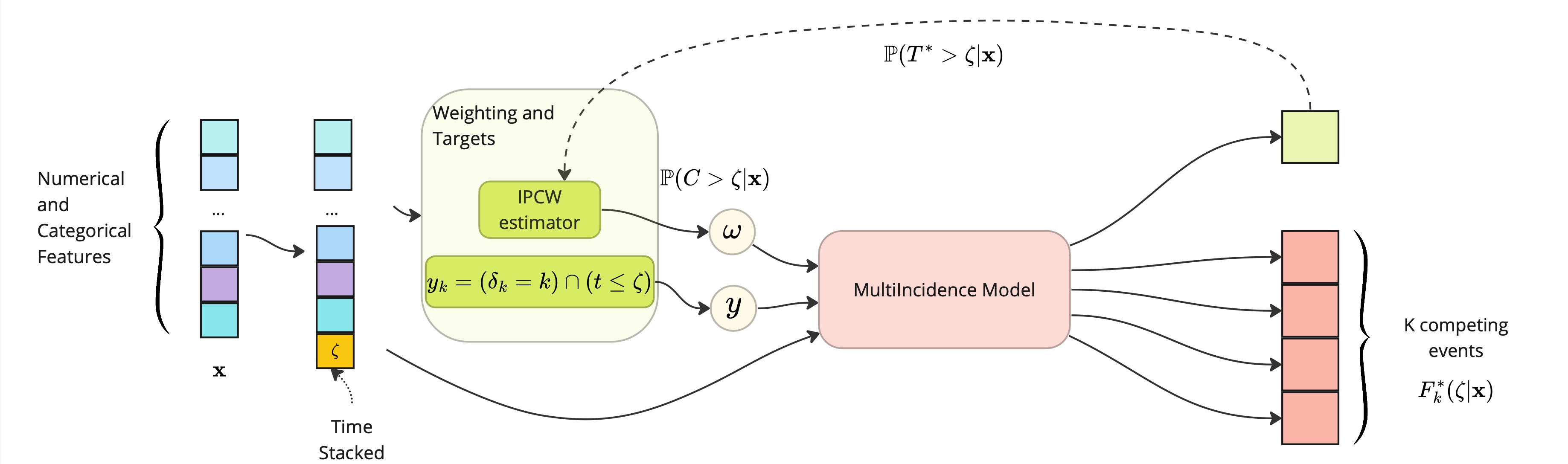}}

\caption{\textbf{MultiIncidence Model with its Feedback Loop.} After giving the input to the model, a random time is given and the weights and the target can be computed. After one iteration, the feedback loop trains the censoring probability -- $G^\star$ in eq.\ref{eqn:full_loss}.}
\label{architecture}
\end{figure*}

While eq.\ref{eqn:full_loss} can be used as a loss in any multiclass machine learning algorithm, we chose Gradient Boosting trees because of their performance on tabular data \citep{grinsztajn_why_2022} and their ability to be fit via stochastic optimization. Most survival or competing risk loss cannot be used with such tree-based models as the require time-derivates and thus smoothness.

We introduce a model, MultiIncidence, predicting all of the CIF for each competing event as well as the global survival function. Predicting these jointly easily maintains the stability of the probabilities as outputs of classifications model sum to one and $\mathbb{P}(T^* \leq \zeta|\mathbf{X}=\mathbf{x}) + \mathbb{P}(T^*>\zeta|\mathbf{X}=\mathbf{x}) = 1$ or 
\begin{flalign*}
    &&
    \sum_{k=1}^K \underbrace{\mathbb{P}(T^* \leq \zeta \cap \Delta^* = k|\mathbf{X}=\mathbf{x})}_{k^{th} \text{CIF}}+\underbrace{\mathbb{P}(T^*>\zeta|\mathbf{X}=\mathbf{x})}_{\text{Survival Probability}} = 1
    &&
    \text{\small (outputs sum to one)}
\end{flalign*}
With loss presented in Eq.\ref{eqn:loss} we can directly predict the CIF instead of predicting the hazards function (the derivative of the CIF) as often done --\emph{e.g.} DeepHit \citep{lee_deephit_2018} or SurvTRACE \citep{wang_survtrace_2022}. This allows us to drop the constant-hazard hypothesis \citep[][]{yanagisawa2023proper, kvamme_continuous_2019, wang_survtrace_2022, rindt_survival_2022}.

\begin{algorithm}[t]
       \caption{MultiIncidence Algorithm - Training}
       \label{alg:incidence}
    \begin{algorithmic}
       \STATE {\bfseries Input:} $\mathbf{x}, \delta, t$
       \ENSURE $\operatorname{min}(t) >0$
    
       \STATE $\hat{G} \gets$ Train $n_{censoring}$ iterations the censoring algorithm
       \FOR[Boosting iterations]{$m=1$ {\bfseries to} $n_{iter}$}
       \FOR{$i=1$ {\bfseries to} $n_{samples}$}
       \STATE $\zeta_{i} \sim \mathcal{U}(0, t_{max})$ \COMMENT{Sample a random time horizon}
       \STATE $y_i \gets \delta_i ~ (t_i \leq \zeta_{i})$ \COMMENT{Computing the target}
       \STATE $w_i \gets 0$ 
       \IF[The observation is not censored]{$t_i > \zeta_{i}$}
            \STATE $w_i \gets \frac{1}{\hat{G}(\zeta_{i})}$
        \ELSIF{$t_i \leq \zeta_{i}$ and $\delta_i \neq 0$}
            \STATE $w_i \gets \frac{1}{\hat{G}(t_{i})}$
        \ENDIF
        \STATE $\Tilde{\mathbf{x}}_i \gets (\mathbf{x}_i, \zeta_{i})$ \COMMENT{Stacking the time to the features}
       \ENDFOR
       \STATE $\zeta \gets (\zeta_i)_{1 \leq i \leq n_{samples}}$
        \STATE $\hat{H}_m(\mathbf{\tilde{x}}) \gets$ Train one iteration of Gradient Boost with $\hat{G}(\zeta |\mathbf{X}= \mathbf{x})$ \COMMENT{Hm is the m-th tree}
       \STATE $(\hat{S}(\zeta |\mathbf{X}= \mathbf{x}), (\hat{F}_k(\zeta |\mathbf{X}= \mathbf{x})_{1\leq k \leq K}) \gets \hat{H}_m(\mathbf{\tilde{x}})$ 
       \STATE $\hat{G} \gets$ Train one iteration the Censoring Feedback Loop with $\hat{S}(\zeta |\mathbf{X}= \mathbf{x})$
       \ENDFOR
    \end{algorithmic}
\end{algorithm}
Our algorithm uses two classifiers (here gradient-boosted trees), one for the censoring trained on binary censored/non-censored labels (i.e. for time $\zeta$, $\mathbb{P}(C > \zeta| \mathbf{X}= \mathbf{x})$), and a classifier for the multiple events. Both of the censoring and event models are corrected with IPCW weights.
To compute these IPCW we iterate the training using a feedback loop (in the like of boosting). We first compute a survival censoring model. Then, with these probabilities, we initiate our MultiIncidence model. After several iterations, we apply a feedback loop to retrain our censoring model.\\
To model complex time dependence, time is stacked as an additional feature. 
At each iteration, we sample different times for each sample and stack the different features as well as the targets to provide more information to our algorithm. 
This approach is made possible by our loss which is separable.
An additional benefit is that we can predict the CIF at any time, unlike models that are optimized for a limited number of times (such as SurvTRACE) and need to be interpolated to other times. 

As Figure \ref{architecture} shows an iteration: we compute the weights for each sample, as well as the target according to the sampled time. A censored sample will have a weight equal to 0 (due to the indicator functions in eq.\ref{eqn:full_loss}). For strictly positive weights, if the target is in $\llbracket1, K\rrbracket$, this will represent that the event of interest has happened before $\zeta$. Finally, a target equal to 0 will notify that the sample has survived any event. We give a pseudocode of the algorithm \ref{alg:incidence}.

\section{Experimental study: Competing risks}

\subsection{Evaluation metrics for competing risks models}

To evaluate the risks of the different events, we use two
metrics\footnote{We do not focus on the
C-index in time, as this metric is biased \citep{blanche_c-index_2019, rindt_survival_2022}}.

\paragraph{Evaluating the predicted probability} We use a proper scoring
rule (PSR). To avoid a form of circularity in the evaluation, we
do not use the PSR that our model optimizes but rather we 
extend that used by \citet{graf_assessment_1999} and
\citet{schoop_quantifying_2011}: we apply it to the Brier Score rather
than the log-loss (Appendix \ref{sec:evaluation_psr} details the formula and the
formal proof that it is indeed proper). To evaluate the model at
all times, we sum it over time, giving the
\emph{integrated Brier score} (IBS).

\paragraph{Prediction accuracy in time}
For many applications, as in predictive maintenance or medicine, a crucial information is: which is the first event that a
subject may encounter. We use a validation metric to check for each sample whether observed events are predicted as the most likely, at given times, chosen as before with quantiles. \emph{E.g.} for an individual that encounters event 2 at $t$, the probability of surviving before $t$ should be the highest compared to the probabilities of encountering each event. We also want the probability of encountering event 2 after $t$ to be the highest one. To do so, we adapt Multi-Class accuracy to different times: 
\begin{definition}[Prediction accuracy at time $\zeta$]
    For a fixed time horizon $\zeta$ and denoting the survival to any event as the index 0, define $\hat{y} = \argmax\limits_{k \in [0, K]} \hat{F}_{k}(\zeta | \mathbf{X}=  \mathbf{x})$, the most probable event in $\zeta$ and $y_\zeta = \mathbb{1}_{t\leq \zeta} \delta$. We remove the censored individuals and $n_{nc}$ represents the number of individuals uncensored at $\zeta$.
    \begin{equation}
        Acc(\zeta)= \frac{1}{n_{nc}} \sum_{i=1}^n \mathbb{1}_{\hat{y}_i = y_{i, \zeta}}~\mathbb{1}_{\overline{\delta_i = 0, t_i \leq \zeta}}
    \end{equation}
\end{definition}
\subsection{Experimental settings}
\paragraph{Synthetic Dataset}
We designed a synthetic dataset with linear relations between features and targets, as well as relations with the censoring distribution of the features (details in Appendix \ref{synthedata}). 
To create the synthetic dataset, for each sample, we draw $2 n_{events}$ parameters from a normal law. Then, we draw the durations from a Weibull distribution for each event from those parameters. To determine the observation, we return the minimum duration with its associated event. Then, the censoring event is computed with the same method. 
\paragraph{SEER Dataset}
This dataset follows more than 470k breast cancer patients for up to ten years
with mortality due to various diseases as outcomes. The censoring is
around $63\%$ and Figure \ref{fig:seer} shows the distribution of the
events. Instead of \cite{lee_deephit_2018} (DeepHit) or
\cite{wang_survtrace_2022} (SurvTRACE), which consider only the
two most prevalent events and censor the rest, defeating the purpose of competing risks, we consider the SEER data set with 3 competing
events, aggregating the other events in a third class. We remove some
features following \citet{wang_survtrace_2022}.
\paragraph{Baselines}
We compared our approach to 7 other models. Aalen-Johansen's estimator \citep{aalen_survival_2008}, Fine \& Gray's linear model \citep{fine_proportional_1999}, a tree-based approach with the Random Survival Forests \citep[RSF,][]{ishwaran_random_2008}, and neural networks: DeepHit \citep{lee_deephit_2018}, Deep Survival Machines \citep[DSM,][]{nagpal2021deep}, DeSurv \citep{pmlr-v151-danks22a} and a transformer model with SurvTRACE \citep{wang_survtrace_2022}. DeepHit is trained with a ranking loss: the C-index summed with a negative log-likelihood, DSM uses a graphical method for feature encodings while DeSurv solves Ordinal Differential Equations for continuous predictions in time. SurvTRACE is trained for three-time horizons (based on quantiles of observed event times) and at time 0, while Aalen-Johansen and Fine \& Gray are trained for all observed event times. In contrast, our method is trained on uniformly sampled time horizons, allowing predictions at any time. To compute the Integrated Brier Score over time, other methods require linear interpolation of their trained times. For times exceeding their trained times, we assume the incidence remains constant.
To be fair across models, we use the same time budget for hyper-parameter tuning (grid in Appendix \ref{tab:gridsearch}).
\subsection{Results, competing risks}
\paragraph{Synthetic dataset}
Figure \ref{fig:tradeoff_competing_synthetic} shows the trade-off between statistical performance (IBS) and training time for each model compared. With the synthetic dataset, we can compute an oracle IBS. MultiIncidence outperforms the other models over the IBS while being the fastest to train.  

We also conduct different experiments on the synthetic dataset varying the number of training points (\autoref{ipsr}), the censoring rate (\autoref{censo}), and the number of features (\autoref{ipsrvstime}). More experiments on the synthetic data set can be found in the appendix \ref{app:synthetic_results}.

\begin{figure}[b!]%
    \begin{minipage}{.26\linewidth}
    \caption{\textbf{Trade-off prediction/training time for competing
risk on the synthetic dataset} Average IBS compared to the fitting time for each model on 20k training data points, censoring rate around 50\%, and a dependant censoring for 6 features.}
    \label{fig:tradeoff_competing_synthetic}
    \end{minipage}%
    \hfill%
    \begin{minipage}{.72\linewidth}%
    \includegraphics[width=1.02\linewidth]{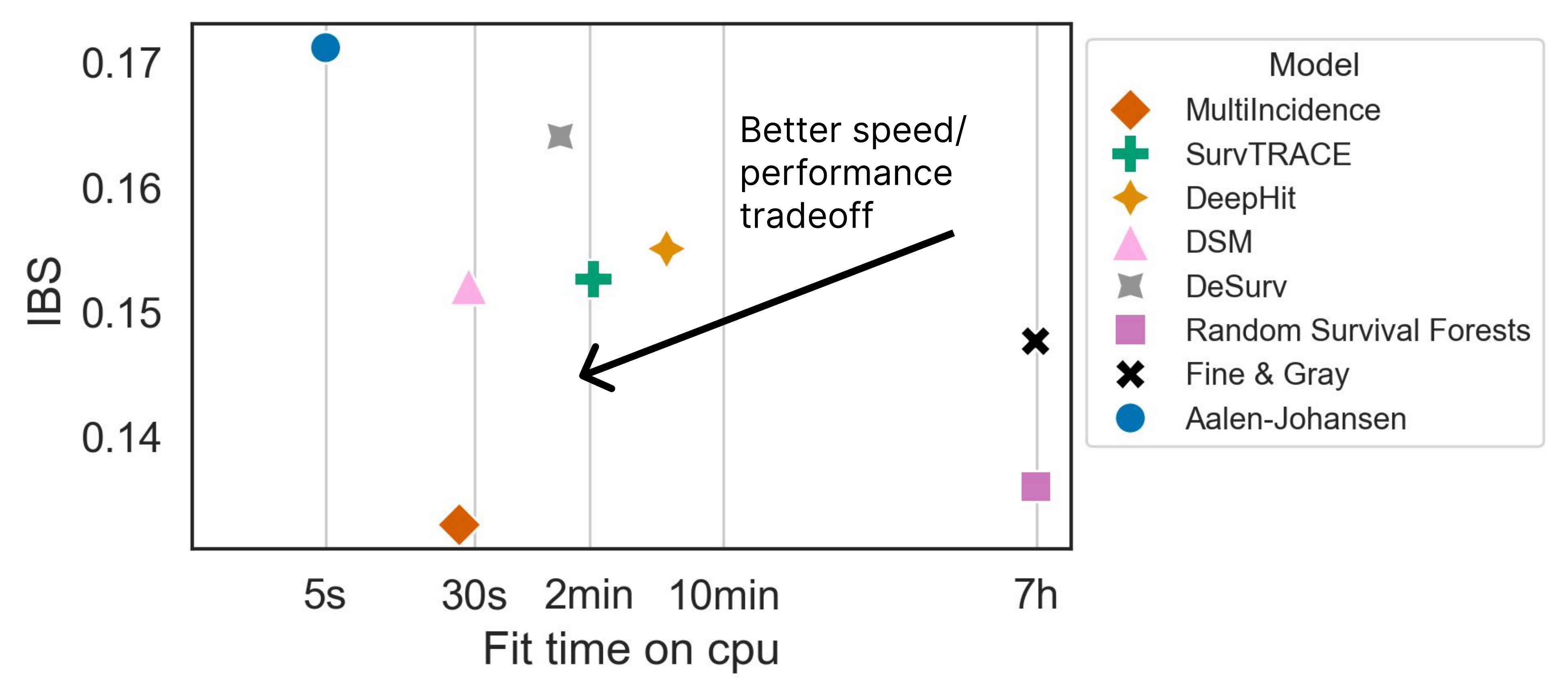}%
    \end{minipage}%
\end{figure} 
\paragraph{Results on SEER Dataset}
On the real-life dataset, we keep 30\% of the data set to test the models. 
\autoref{fig:tradeoff_competing}
compares models with the Integrated Brier score (with Kaplan-Meier weights of
\cite{graf_assessment_1999} due to lack of oracle).
MultiIncidence achieves the best score and the shortest fit time.
Random Survival Forest is not made to be used with that many samples (100k) and uses more than 50\,Gb of RAM.
MultiIncidence maintains its marked lead with much fewer training samples (Appendix \ref{app:seer_results}). 

\begin{figure}[t!]
    \begin{minipage}{.26\linewidth}
    \caption{\textbf{Trade-off prediction/training time for competing
risk on the SEER dataset} Average IBS compared to the fitting time for each model on the maximum training points (330k) except for Fine \& Gray (50k) and RSF (100k). Table \ref{tab:ibs_event_seer} gives IBS values for each event.}
    \label{fig:tradeoff_competing}
    \end{minipage}%
    \hfill%
    \begin{minipage}{.72\linewidth}
    \includegraphics[width=1.02\linewidth]{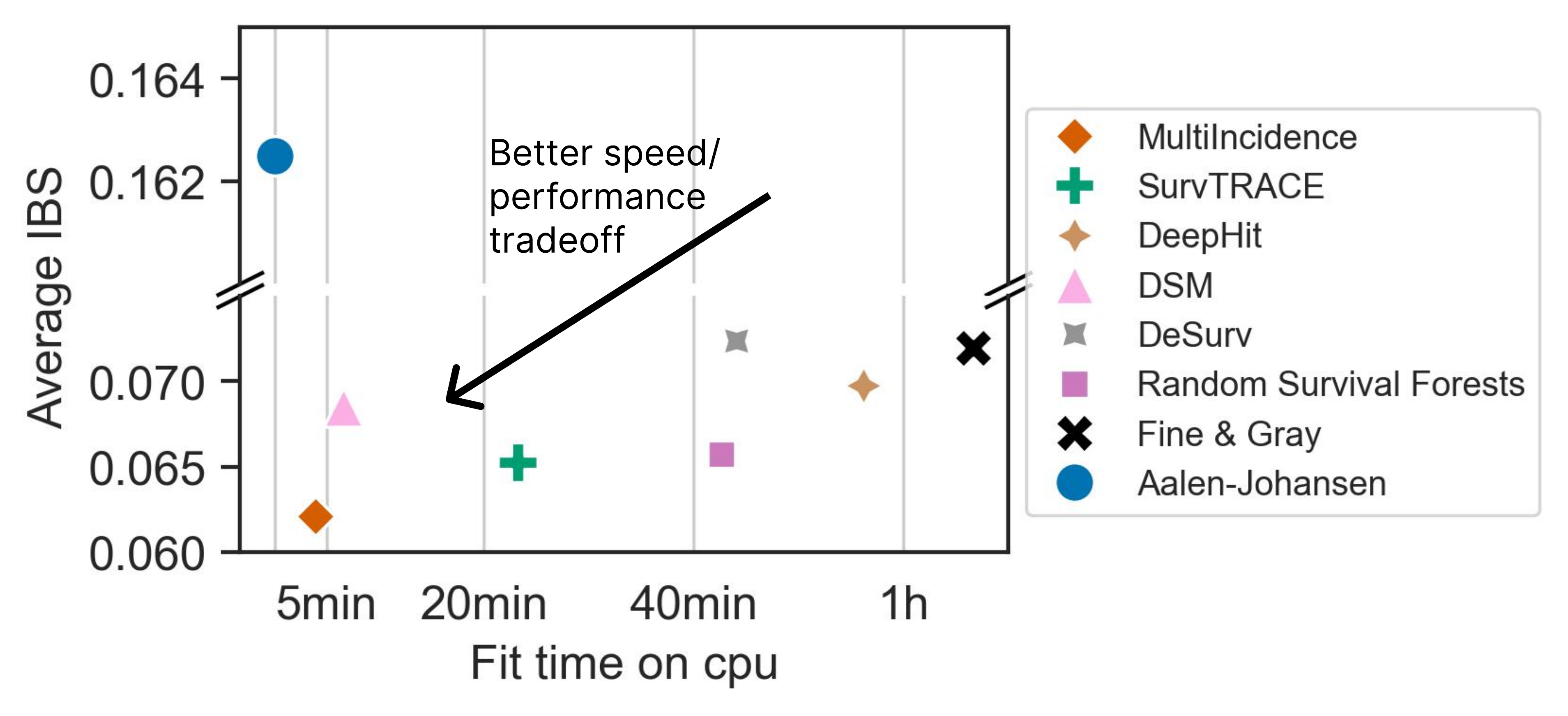}%
    \end{minipage}%
\end{figure} 

Event and time-specific C-indexes are presented in table~\ref{tab:results_seer}, but do not capture the models' ability to predict which event is more likely to occur at a given time horizon. This is measured by accuracy in
time in \autoref{fig:acc}, and MultiIncidence has the best performance. The benefit grows as time increases, meaning that it
better interpolates in times. 
\begin{figure}[t!]
    \begin{minipage}{.26\linewidth}
    \caption{\textbf{Prediction accuracy at time $\zeta$} Accuracy of the Argmax of the Cumulative Incidence Functions on different quantiles in time on the SEER Dataset (Higher is Better).}
    \label{fig:acc}
    \end{minipage}%
    \hfill%
    \begin{minipage}{.72\linewidth}
	\includegraphics[width=1.02\columnwidth]{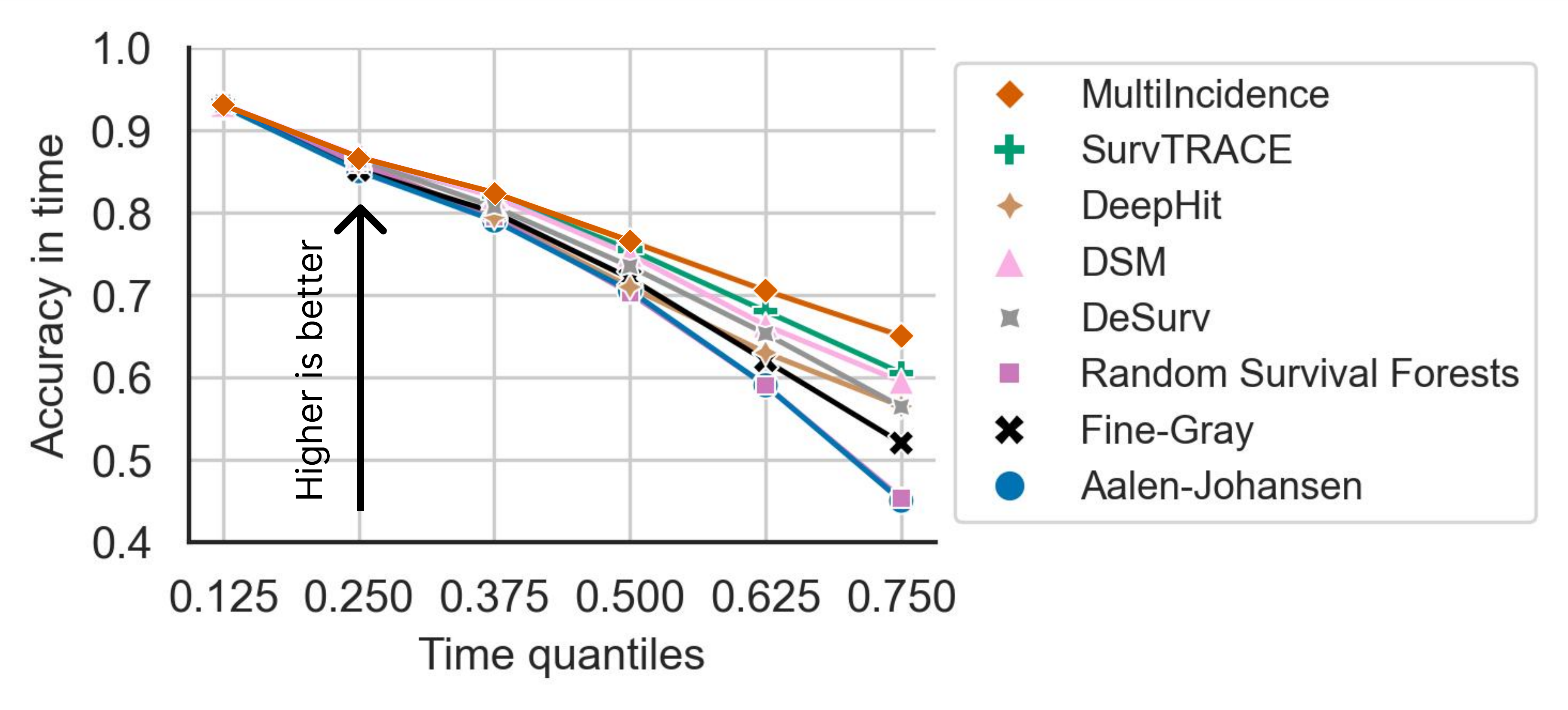}%
    \end{minipage}%
\end{figure}
\section{Usage in Survival Analysis}
\subsection{Survival experiments}
\paragraph{Real-life Datasets}
As our model can also handle survival analysis, we perform survival
analysis on two real-life survival datasets: SUPPORT and METABRIC, both available in the Pycox library. 
\begin{description}[itemsep=1pt, parsep=1pt, topsep=0pt]
\item[METABRIC] The Molecular Taxonomy of Breast Cancer International Consortium is a dataset on gene expression with around 2k data points\item[SUPPORT] Study to Understand Prognoses Preferences Outcomes and Risks of Treatment is a dataset on the survival time of hospital patients with more than 8k datapoints.
\end{description}

\paragraph{Evaluation}
We use different metrics to evaluate our models. As above we use the Integrated Brier Score (detailed in Appendix \ref{sec:evaluation_psr}), but we also add another metric from \citet{yanagisawa2023proper}, called $S_{Cen-log-simple}$ (detailed in Appendix \ref{sec:cen_log_simple}). 
This last metric approximates the proper scoring metric in \citet{rindt_survival_2022} --and is not exactly proper, see  Appendix \ref{sec:cen_log_simple}. It is useful because it can be used on any model as it does not require the density of the Cumulative Incidence Function. 

\paragraph{Baselines}
We compare our model with SOTA competing risks models, including SurvTRACE \citep{wang_survtrace_2022}, DeepHit \citep{lee_deephit_2018} and Random Survival Forests \citep{ishwaran_random_2008}.
We also benchmark some SOTA survival ones: neural networks \emph{e.g.} \citep[PCHazard][]{kvamme_continuous_2019}, survival game \citep{han2021inverse} and neural networks trained with a proper survival-analysis scoring rule, \emph{e.g.} SumoNet \citep{rindt_survival_2022}, and DQS \citep{yanagisawa2023proper}.

\subsection{Results in survival usage}

\begin{table}[t] 
\caption{\textbf{Survival dataset}: Integrated Brier Score and $S_{Cen-log-simple}$ (Lower is Better)
\label{tab:survival}}
\small\sc
\begin{tabular}{l|rr|rr}
\toprule
\hfill Dataset & \multicolumn{2}{c|}{\textbf{SUPPORT}} & \multicolumn{2}{c}{\textbf{METABRIC}} \\
\toprule
Model& IBS& $S_{Cen-log-simple}$ & IBS & $S_{Cen-log-simple}$ \\
\midrule
Random Survival Forest &0.225±0.004&1.942±0.023&0.197±0.025&2.442±0.044\\
DeepHit & 0.217±0.004 & 2.249±0.009 &0.180±0.014&2.271±0.019\\
\citet{han2021inverse} &0.260±0.012&3.483±0.307 &0.191±0.003&2.420±0.150 \\
PCHazard&0.210±0.007&2.192±0.024&0.176±0.014&2.246±0.046\\
Han&0.260±0.012&3.483±0.307&0.191±0.003&2.420±0.150\\
DQS&0.202±0.007&1.987±0.069&0.180±0.034&2.205±0.044\\
SuMo net&\underline{0.194±0.010}&\textbf{1.721±0.016}&\underline{0.169±0.009}&\textbf{2.302±0.059}\\
SurvTRACE&\underline{0.194±0.005}&1.870±0.018&\textbf{0.168±0.011}&2.270±0.034\\
MultiIncidence&\textbf{0.191±0.006}& \underline{1.740±0.020}&\textbf{0.168±0.019}&\textbf{2.169±0.056}\\
\bottomrule
\end{tabular}
\end{table}

\paragraph{Prediction performance} For both datasets, MultiIncidence achieves the best results on IBS and tied with Sumo Net for $S_{Cen-log-simple}$ (Table \ref{tab:survival} and Appendix \ref{saall} for the C-index). Sumo Net uses $S_{Cen-log-simple}$ as a training loss; note however that this metric is not guaranteed to be a proper scoring rule thus it does not ensure recovering the actual risks.

\begin{figure}[t]
    \begin{minipage}{.26\linewidth}
    \caption{\textbf{Trade-off prediction/training time in survival
usage} Performances (measured by IBS, integrated Brier score) function of fitting time for each model.}
    \label{fig:tradeoff}
    \end{minipage}%
    \hfill%
    \begin{minipage}{.72\linewidth}
    \includegraphics[width=0.9\columnwidth]{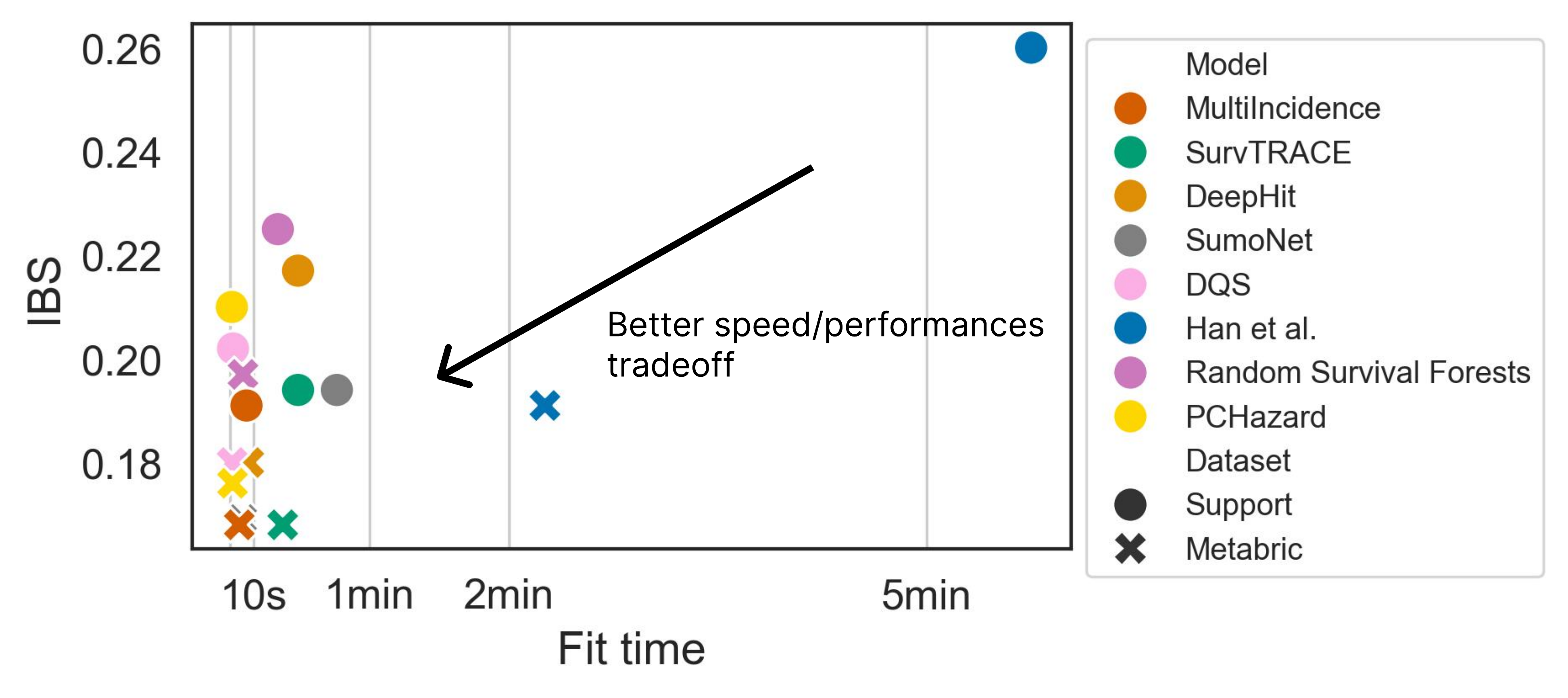}
    \end{minipage}%
\end{figure}

\paragraph{Computational time}
Figure \ref{fig:tradeoff} shows the trade-off between training time and performance in IBS, a trade-off that MultiIncidence excels at, being the best model for statistical performance and also one of the fastest. Appendix \ref{tradeoffyana} gives the same figure for the $S_{Cen-log-simple}$ metric, and MultiIncidence reaches a great trade-off rivaled only by SumoNet, which has competing performance on the $S_{Cen-log-simple}$ loss. Varying sample size from 1k to 100k on a synthetic dataset confirms that MultiIncidence and DQS are faster (less than 1min on 100k data points), \citeauthor{han2021inverse}, SumoNet, and Random Survival Forests slower for large sample size, with a super-linear time complexity for SumoNet and Random Survival Forests that makes them untractable for large data (Appendix \ref{survtime}).  

\section*{Discussion and Conclusion}

\paragraph{Code reproducibility and data}
The code is available on GitHub as a library called \hyperlink{https://github.com/soda-inria/hazardous/tree/main}{hazardous}. 

\paragraph{Social impact} Our contribution is not directly applied and has no immediate social impact, but we hope that it will improve medical applications where survival analysis is central.

\paragraph{Limitations and further work}

Further work should consider removing the assumption of noninformative censoring. This assumption is very common in the literature, though some recent works have relaxed it in survival settings \citep{foomani2023copulabased,zhang2023deep}.

\paragraph{Conclusion}

For competing risks, which is a generalization of survival analysis to classify the type of outcome, we first propose and prove a (strictly) proper scoring rule. It is a reweighted log loss that can easily be used as a loss for machine learning: it is separable in the observations and thus suited to stochastic solvers; it does not require time-wise derivative (unlike most survival models) and can be used in non-differentiable models. We plug it into gradient-boosting trees, in an algorithm called MultiIncidence. Thanks to time used as a feature and its feedback loop to better estimate censoring probabilities, MultiIncidence outperforms state-of-the-art methods on a synthetic dataset as well as real-life datasets both for competing risk (classification on time-censored data) and standard survival (time-to-event regression with right censoring). It is also faster to train over many samples. As a loss, it easily brings survival or competing risks to many models: scalable linear models to replace clinical standard \citeauthor{fine_proportional_1999} that do not scale, or deep learning, including fine-tuning foundation models.

\bibliography{papier2}

\begin{thebibliography}{52}
\providecommand{\natexlab}[1]{#1}
\providecommand{\url}[1]{\texttt{#1}}
\expandafter\ifx\csname urlstyle\endcsname\relax
  \providecommand{\doi}[1]{doi: #1}\else
  \providecommand{\doi}{doi: \begingroup \urlstyle{rm}\Url}\fi

\bibitem[Zhu et~al.(2016)Zhu, Yao, and Huang]{zhu_deep_2016}
Xinliang Zhu, Jiawen Yao, and Junzhou Huang.
\newblock Deep convolutional neural network for survival analysis with pathological images.
\newblock In \emph{2016 {IEEE} {International} {Conference} on {Bioinformatics} and {Biomedicine} ({BIBM})}, pages 544--547, Shenzhen, China, December 2016. IEEE.
\newblock ISBN 978-1-5090-1611-2.
\newblock \doi{10.1109/BIBM.2016.7822579}.
\newblock URL \url{http://ieeexplore.ieee.org/document/7822579/}.

\bibitem[Chaddad et~al.(2016)Chaddad, Desrosiers, and Toews]{chaddad_radiomic_2016}
Ahmad Chaddad, Christian Desrosiers, and Matthew Toews.
\newblock Radiomic analysis of multi-contrast brain {MRI} for the prediction of survival in patients with glioblastoma multiforme.
\newblock In \emph{2016 38th {Annual} {International} {Conference} of the {IEEE} {Engineering} in {Medicine} and {Biology} {Society} ({EMBC})}, pages 4035--4038, Orlando, FL, USA, August 2016. IEEE.
\newblock ISBN 978-1-4577-0220-4.
\newblock \doi{10.1109/EMBC.2016.7591612}.
\newblock URL \url{http://ieeexplore.ieee.org/document/7591612/}.

\bibitem[Gaynor et~al.(1993)Gaynor, Feuer, Tan, Wu, Little, Straus, Clarkson, and Brennan]{gaynor_use_1993}
Jeffrey~J. Gaynor, Eric~J. Feuer, Claire~C. Tan, Danny~H. Wu, Claudia~R. Little, David~J. Straus, Bayard~D. Clarkson, and Murray~F. Brennan.
\newblock On the {Use} of {Cause}-{Specific} {Failure} and {Conditional} {Failure} {Probabilities}: {Examples} from {Clinical} {Oncology} {Data}.
\newblock \emph{Journal of the American Statistical Association}, 88\penalty0 (422):\penalty0 400--409, June 1993.
\newblock ISSN 0162-1459, 1537-274X.
\newblock \doi{10.1080/01621459.1993.10476289}.
\newblock URL \url{http://www.tandfonline.com/doi/abs/10.1080/01621459.1993.10476289}.

\bibitem[Rith et~al.(2018)Rith, Soliman, Fillone, Biona, and Lopez]{rith_analysis_2018}
Monorom Rith, Jimwell Soliman, Alexis Fillone, Jose Bienvenido~M. Biona, and Neil~Stephen Lopez.
\newblock Analysis of {Vehicle} {Survival} {Rates} for {Metro}-{Manila}.
\newblock In \emph{2018 {IEEE} 10th {International} {Conference} on {Humanoid}, {Nanotechnology}, {Information} {Technology},{Communication} and {Control}, {Environment} and {Management} ({HNICEM})}, pages 1--4, Baguio City, Philippines, November 2018. IEEE.
\newblock ISBN 978-1-5386-7767-4.
\newblock \doi{10.1109/HNICEM.2018.8666408}.
\newblock URL \url{https://ieeexplore.ieee.org/document/8666408/}.

\bibitem[Susto et~al.(2015)Susto, Schirru, Pampuri, McLoone, and Beghi]{susto_machine_2015}
Gian~Antonio Susto, Andrea Schirru, Simone Pampuri, Sean McLoone, and Alessandro Beghi.
\newblock Machine {Learning} for {Predictive} {Maintenance}: {A} {Multiple} {Classifier} {Approach}.
\newblock \emph{IEEE Transactions on Industrial Informatics}, 11\penalty0 (3):\penalty0 812--820, June 2015.
\newblock ISSN 1551-3203, 1941-0050.
\newblock \doi{10.1109/TII.2014.2349359}.
\newblock URL \url{http://ieeexplore.ieee.org/document/6879441/}.

\bibitem[Maystre and Russo()]{maystre_temporally-consistent_nodate}
Lucas Maystre and Daniel Russo.
\newblock Temporally-{Consistent} {Survival} {Analysis}.

\bibitem[Kaplan and Meier(1958)]{kaplan_nonparametric_1958}
E.~L. Kaplan and Paul Meier.
\newblock Nonparametric {Estimation} from {Incomplete} {Observations}.
\newblock \emph{Journal of the American Statistical Association}, 53\penalty0 (282):\penalty0 457--481, June 1958.
\newblock ISSN 0162-1459, 1537-274X.
\newblock \doi{10.1080/01621459.1958.10501452}.
\newblock URL \url{http://www.tandfonline.com/doi/abs/10.1080/01621459.1958.10501452}.

\bibitem[Cox(1972)]{cox_regression_1972}
D.~R. Cox.
\newblock Regression {Models} and {Life}‐{Tables}.
\newblock \emph{Journal of the Royal Statistical Society: Series B (Methodological)}, 34\penalty0 (2):\penalty0 187--202, January 1972.
\newblock ISSN 0035-9246, 2517-6161.
\newblock \doi{10.1111/j.2517-6161.1972.tb00899.x}.
\newblock URL \url{https://rss.onlinelibrary.wiley.com/doi/10.1111/j.2517-6161.1972.tb00899.x}.

\bibitem[National Cancer~Institute(2023)]{seer}
Surveillance Research~Program National Cancer~Institute, DCCPS.
\newblock Surveillance, epidemiology, and end results (seer) program (www.seer.cancer.gov) seer*stat database: Incidence - seer research data, 8 registries, nov 2021 sub (1975-2020) - linked to county attributes - time dependent (1990-2020) income/rurality, 1969-2020 counties, national cancer institute, dccps, surveillance research program, , based on the november 2022 submission, 2023.

\bibitem[Koene et~al.(2016)Koene, Prizment, Blaes, and Konety]{koene2016shared}
Ryan~J Koene, Anna~E Prizment, Anne Blaes, and Suma~H Konety.
\newblock Shared risk factors in cardiovascular disease and cancer.
\newblock \emph{Circulation}, 133\penalty0 (11):\penalty0 1104--1114, 2016.

\bibitem[Wolbers et~al.(2009)Wolbers, Koller, Witteman, and Steyerberg]{wolbers2009prognostic}
Marcel Wolbers, Michael~T Koller, Jacqueline~CM Witteman, and Ewout~W Steyerberg.
\newblock Prognostic models with competing risks: methods and application to coronary risk prediction.
\newblock \emph{Epidemiology}, 20\penalty0 (4):\penalty0 555--561, 2009.

\bibitem[Ramspek et~al.(2022)Ramspek, Teece, Snell, Evans, Riley, van Smeden, van Geloven, and van Diepen]{ramspek2022lessons}
Chava~L Ramspek, Lucy Teece, Kym~IE Snell, Marie Evans, Richard~D Riley, Maarten van Smeden, Nan van Geloven, and Merel van Diepen.
\newblock Lessons learnt when accounting for competing events in the external validation of time-to-event prognostic models.
\newblock \emph{International journal of epidemiology}, 51\penalty0 (2):\penalty0 615--625, 2022.

\bibitem[Koller et~al.(2012)Koller, Raatz, Steyerberg, and Wolbers]{koller2012competing}
Michael~T Koller, Heike Raatz, Ewout~W Steyerberg, and Marcel Wolbers.
\newblock Competing risks and the clinical community: irrelevance or ignorance?
\newblock \emph{Statistics in medicine}, 31\penalty0 (11-12):\penalty0 1089--1097, 2012.

\bibitem[van Walraven and McAlister(2016)]{van2016competing}
Carl van Walraven and Finlay~A McAlister.
\newblock Competing risk bias was common in kaplan--meier risk estimates published in prominent medical journals.
\newblock \emph{Journal of clinical epidemiology}, 69:\penalty0 170--173, 2016.

\bibitem[Wang and Sun(2022)]{wang_survtrace_2022}
Zifeng Wang and Jimeng Sun.
\newblock {SurvTRACE}: {Transformers} for {Survival} {Analysis} with {Competing} {Events}.
\newblock In \emph{Proceedings of the 13th {ACM} {International} {Conference} on {Bioinformatics}, {Computational} {Biology} and {Health} {Informatics}}, pages 1--9, August 2022.
\newblock \doi{10.1145/3535508.3545521}.
\newblock URL \url{http://arxiv.org/abs/2110.00855}.
\newblock arXiv:2110.00855 [cs, stat].

\bibitem[Graf et~al.(1999)Graf, Schmoor, Sauerbrei, and Schumacher]{graf_assessment_1999}
Erika Graf, Claudia Schmoor, Willi Sauerbrei, and Martin Schumacher.
\newblock Assessment and comparison of prognostic classification schemes for survival data.
\newblock \emph{Statistics in Medicine}, 18\penalty0 (17-18):\penalty0 2529--2545, September 1999.
\newblock ISSN 0277-6715, 1097-0258.
\newblock \doi{10.1002/(SICI)1097-0258(19990915/30)18:17/18<2529::AID-SIM274>3.0.CO;2-5}.
\newblock URL \url{https://onlinelibrary.wiley.com/doi/10.1002/(SICI)1097-0258(19990915/30)18:17/18<2529::AID-SIM274>3.0.CO;2-5}.

\bibitem[Rindt et~al.(2022)Rindt, Hu, Steinsaltz, and Sejdinovic]{rindt_survival_2022}
David Rindt, Robert Hu, David Steinsaltz, and Dino Sejdinovic.
\newblock Survival {Regression} with {Proper} {Scoring} {Rules} and {Monotonic} {Neural} {Networks}, February 2022.
\newblock URL \url{http://arxiv.org/abs/2103.14755}.
\newblock arXiv:2103.14755 [cs, stat].

\bibitem[Grinsztajn et~al.(2022)Grinsztajn, Oyallon, and Varoquaux]{grinsztajn_why_2022}
Léo Grinsztajn, Edouard Oyallon, and Gaël Varoquaux.
\newblock Why do tree-based models still outperform deep learning on tabular data?, July 2022.
\newblock URL \url{http://arxiv.org/abs/2207.08815}.
\newblock arXiv:2207.08815 [cs, stat].

\bibitem[Van~Belle et~al.(2011)Van~Belle, Pelckmans, Van~Huffel, and Suykens]{van_belle_support_2011}
Vanya Van~Belle, Kristiaan Pelckmans, Sabine Van~Huffel, and Johan~A.K. Suykens.
\newblock Support vector methods for survival analysis: a comparison between ranking and regression approaches.
\newblock \emph{Artificial Intelligence in Medicine}, 53\penalty0 (2):\penalty0 107--118, October 2011.
\newblock ISSN 09333657.
\newblock \doi{10.1016/j.artmed.2011.06.006}.
\newblock URL \url{https://linkinghub.elsevier.com/retrieve/pii/S0933365711000765}.

\bibitem[Han et~al.(2021)Han, Goldstein, Puli, Wies, Perotte, and Ranganath]{han2021inverse}
Xintian Han, Mark Goldstein, Aahlad Puli, Thomas Wies, Adler Perotte, and Rajesh Ranganath.
\newblock Inverse-weighted survival games.
\newblock \emph{Advances in neural information processing systems}, 34:\penalty0 2160--2172, 2021.

\bibitem[Katzman et~al.(2018)Katzman, Shaham, Cloninger, Bates, Jiang, and Kluger]{katzman_deepsurv_2018}
Jared~L. Katzman, Uri Shaham, Alexander Cloninger, Jonathan Bates, Tingting Jiang, and Yuval Kluger.
\newblock {DeepSurv}: personalized treatment recommender system using a {Cox} proportional hazards deep neural network.
\newblock \emph{BMC Medical Research Methodology}, 18\penalty0 (1):\penalty0 24, December 2018.
\newblock ISSN 1471-2288.
\newblock \doi{10.1186/s12874-018-0482-1}.
\newblock URL \url{https://bmcmedresmethodol.biomedcentral.com/articles/10.1186/s12874-018-0482-1}.

\bibitem[Kvamme and Borgan(2019{\natexlab{a}})]{kvamme_continuous_2019}
Haavard Kvamme and ornulf Borgan.
\newblock Continuous and {Discrete}-{Time} {Survival} {Prediction} with {Neural} {Networks}, October 2019{\natexlab{a}}.
\newblock URL \url{http://arxiv.org/abs/1910.06724}.
\newblock arXiv:1910.06724 [cs, stat].

\bibitem[Yanagisawa(2023)]{yanagisawa2023proper}
Hiroki Yanagisawa.
\newblock Proper scoring rules for survival analysis, 2023.

\bibitem[Nelson(1972)]{nelson_theory_1972}
Wayne Nelson.
\newblock Theory and {Applications} of {Hazard} {Plotting} for {Censored} {Failure} {Data}.
\newblock \emph{Technometrics}, 14\penalty0 (4):\penalty0 945--966, November 1972.
\newblock ISSN 0040-1706, 1537-2723.
\newblock \doi{10.1080/00401706.1972.10488991}.
\newblock URL \url{http://www.tandfonline.com/doi/abs/10.1080/00401706.1972.10488991}.

\bibitem[Aalen et~al.(2008)Aalen, Borgan, and Gjessing]{aalen_survival_2008}
Odd~O. Aalen, Ornulf Borgan, and Haakon~K. Gjessing.
\newblock \emph{Survival and {Event} {History} {Analysis}}.
\newblock Statistics for {Biology} and {Health}. Springer New York, New York, NY, 2008.
\newblock ISBN 978-0-387-20287-7 978-0-387-68560-1.
\newblock \doi{10.1007/978-0-387-68560-1}.
\newblock URL \url{http://link.springer.com/10.1007/978-0-387-68560-1}.

\bibitem[Fine and Gray(1999)]{fine_proportional_1999}
Jason~P. Fine and Robert~J. Gray.
\newblock A {Proportional} {Hazards} {Model} for the {Subdistribution} of a {Competing} {Risk}.
\newblock \emph{Journal of the American Statistical Association}, 94\penalty0 (446):\penalty0 496--509, June 1999.
\newblock ISSN 0162-1459, 1537-274X.
\newblock \doi{10.1080/01621459.1999.10474144}.
\newblock URL \url{http://www.tandfonline.com/doi/abs/10.1080/01621459.1999.10474144}.

\bibitem[Ishwaran et~al.(2008)Ishwaran, Kogalur, Blackstone, and Lauer]{ishwaran_random_2008}
Hemant Ishwaran, Udaya~B. Kogalur, Eugene~H. Blackstone, and Michael~S. Lauer.
\newblock Random survival forests.
\newblock \emph{The Annals of Applied Statistics}, 2\penalty0 (3), September 2008.
\newblock ISSN 1932-6157.
\newblock \doi{10.1214/08-AOAS169}.
\newblock URL \url{http://arxiv.org/abs/0811.1645}.
\newblock arXiv:0811.1645 [stat].

\bibitem[Kretowska(2018)]{kretowska_tree-based_2018}
Malgorzata Kretowska.
\newblock Tree-based models for survival data with competing risks.
\newblock \emph{Computer Methods and Programs in Biomedicine}, 159:\penalty0 185--198, June 2018.
\newblock ISSN 01692607.
\newblock \doi{10.1016/j.cmpb.2018.03.017}.
\newblock URL \url{https://linkinghub.elsevier.com/retrieve/pii/S0169260717314347}.

\bibitem[Bellot and Schaar(2018)]{bellot_tree-based_2018}
Alexis Bellot and Mihaela Schaar.
\newblock Tree-based {Bayesian} {Mixture} {Model} for {Competing} {Risks}.
\newblock In \emph{Proceedings of the {Twenty}-{First} {International} {Conference} on {Artificial} {Intelligence} and {Statistics}}, pages 910--918. PMLR, March 2018.
\newblock URL \url{https://proceedings.mlr.press/v84/bellot18a.html}.
\newblock ISSN: 2640-3498.

\bibitem[Bellot and van~der Schaar(2018)]{bellot_multitask_2018}
Alexis Bellot and Mihaela van~der Schaar.
\newblock Multitask {Boosting} for {Survival} {Analysis} with {Competing} {Risks}.
\newblock In \emph{Advances in {Neural} {Information} {Processing} {Systems}}, volume~31. Curran Associates, Inc., 2018.
\newblock URL \url{https://proceedings.neurips.cc/paper_files/paper/2018/hash/2afe4567e1bf64d32a5527244d104cea-Abstract.html}.

\bibitem[Lee et~al.(2018)Lee, Zame, Yoon, and Van Der~Schaar]{lee_deephit_2018}
Changhee Lee, William Zame, Jinsung Yoon, and Mihaela Van Der~Schaar.
\newblock {DeepHit}: {A} {Deep} {Learning} {Approach} to {Survival} {Analysis} {With} {Competing} {Risks}.
\newblock \emph{Proceedings of the AAAI Conference on Artificial Intelligence}, 32\penalty0 (1), April 2018.
\newblock ISSN 2374-3468, 2159-5399.
\newblock \doi{10.1609/aaai.v32i1.11842}.
\newblock URL \url{https://ojs.aaai.org/index.php/AAAI/article/view/11842}.

\bibitem[Aala and van~der Schaar(2017)]{aala_deep_2017}
Ahmed~M. Aala and Mihaela van~der Schaar.
\newblock Deep {Multi}-task {Gaussian} {Processes} for {Survival} {Analysis} with {Competing} {Risks}.
\newblock In \emph{Advances in {Neural} {Information} {Processing} {Systems}}, volume~30. Curran Associates, Inc., 2017.
\newblock URL \url{https://papers.nips.cc/paper_files/paper/2017/hash/861dc9bd7f4e7dd3cccd534d0ae2a2e9-Abstract.html}.

\bibitem[Danks and Yau(2022{\natexlab{a}})]{danks_derivative-based_2022}
Dominic Danks and Christopher Yau.
\newblock Derivative-{Based} {Neural} {Modelling} of {Cumulative} {Distribution} {Functions} for {Survival} {Analysis}.
\newblock In Gustau Camps-Valls, Francisco J.~R. Ruiz, and Isabel Valera, editors, \emph{Proceedings of {The} 25th {International} {Conference} on {Artificial} {Intelligence} and {Statistics}}, volume 151 of \emph{Proceedings of {Machine} {Learning} {Research}}, pages 7240--7256. PMLR, March 2022{\natexlab{a}}.
\newblock URL \url{https://proceedings.mlr.press/v151/danks22a.html}.

\bibitem[Nagpal et~al.(2021)Nagpal, Li, and Dubrawski]{nagpal2021deep}
Chirag Nagpal, Xinyu~Rachel Li, and Artur Dubrawski.
\newblock Deep survival machines: Fully parametric survival regression and representation learning for censored data with competing risks, 2021.

\bibitem[Monterrubio-Gómez et~al.(2022)Monterrubio-Gómez, Constantine-Cooke, and Vallejos]{monterrubio-gomez_review_2022}
Karla Monterrubio-Gómez, Nathan Constantine-Cooke, and Catalina~A. Vallejos.
\newblock A review on competing risks methods for survival analysis, December 2022.
\newblock URL \url{http://arxiv.org/abs/2212.05157}.
\newblock arXiv:2212.05157 [stat].

\bibitem[Harrell et~al.(1982)Harrell, Califf, Pryor, Lee, and Rosati]{harrell}
Jr~Harrell, Frank~E., Robert~M. Califf, David~B. Pryor, Kerry~L. Lee, and Robert~A. Rosati.
\newblock {Evaluating the Yield of Medical Tests}.
\newblock \emph{JAMA}, 247\penalty0 (18):\penalty0 2543--2546, 05 1982.
\newblock ISSN 0098-7484.
\newblock \doi{10.1001/jama.1982.03320430047030}.
\newblock URL \url{https://doi.org/10.1001/jama.1982.03320430047030}.

\bibitem[Blanche et~al.(2019)Blanche, Kattan, and Gerds]{blanche_c-index_2019}
Paul Blanche, Michael~W Kattan, and Thomas~A Gerds.
\newblock The c-index is not proper for the evaluation of \$t\$-year predicted risks.
\newblock \emph{Biostatistics}, 20\penalty0 (2):\penalty0 347--357, April 2019.
\newblock ISSN 1465-4644, 1468-4357.
\newblock \doi{10.1093/biostatistics/kxy006}.
\newblock URL \url{https://academic.oup.com/biostatistics/article/20/2/347/4864363}.

\bibitem[Antolini et~al.(2005)Antolini, Boracchi, and Biganzoli]{antolini_timedependent_2005}
Laura Antolini, Patrizia Boracchi, and Elia Biganzoli.
\newblock A time‐dependent discrimination index for survival data.
\newblock \emph{Statistics in Medicine}, 24\penalty0 (24):\penalty0 3927--3944, December 2005.
\newblock ISSN 0277-6715, 1097-0258.
\newblock \doi{10.1002/sim.2427}.
\newblock URL \url{https://onlinelibrary.wiley.com/doi/10.1002/sim.2427}.

\bibitem[Uno et~al.(2011)Uno, Cai, Pencina, D'Agostino, and Wei]{uno_cstatistics_2011}
Hajime Uno, Tianxi Cai, Michael~J. Pencina, Ralph~B. D'Agostino, and L.~J. Wei.
\newblock On the {C}‐statistics for evaluating overall adequacy of risk prediction procedures with censored survival data.
\newblock \emph{Statistics in Medicine}, 30\penalty0 (10):\penalty0 1105--1117, May 2011.
\newblock ISSN 0277-6715, 1097-0258.
\newblock \doi{10.1002/sim.4154}.
\newblock URL \url{https://onlinelibrary.wiley.com/doi/10.1002/sim.4154}.

\bibitem[Blanche et~al.(2013)Blanche, Dartigues, and Jacqmin-Gadda]{blanche2013estimating}
Paul Blanche, Jean-Fran{\c{c}}ois Dartigues, and H{\'e}l{\`e}ne Jacqmin-Gadda.
\newblock Estimating and comparing time-dependent areas under receiver operating characteristic curves for censored event times with competing risks.
\newblock \emph{Statistics in medicine}, 32\penalty0 (30):\penalty0 5381--5397, 2013.

\bibitem[Van~Calster et~al.(2019)Van~Calster, McLernon, Van~Smeden, Wynants, Steyerberg, and {Topic Group ‘Evaluating diagnostic tests prediction models’ of the STRATOS initiative}]{van2019calibration}
Ben Van~Calster, David~J McLernon, Maarten Van~Smeden, Laure Wynants, Ewout~W Steyerberg, and {Topic Group ‘Evaluating diagnostic tests prediction models’ of the STRATOS initiative}.
\newblock Calibration: the achilles heel of predictive analytics.
\newblock \emph{BMC medicine}, 17\penalty0 (1):\penalty0 230, 2019.

\bibitem[Benedetti(2010)]{benedetti_scoring_2010}
Riccardo Benedetti.
\newblock Scoring {Rules} for {Forecast} {Verification}.
\newblock \emph{Monthly Weather Review}, 138\penalty0 (1):\penalty0 203--211, January 2010.
\newblock ISSN 1520-0493, 0027-0644.
\newblock \doi{10.1175/2009MWR2945.1}.
\newblock URL \url{http://journals.ametsoc.org/doi/10.1175/2009MWR2945.1}.

\bibitem[Merkle and Steyvers(2013)]{merkle_choosing_2013}
Edgar~C. Merkle and Mark Steyvers.
\newblock Choosing a {Strictly} {Proper} {Scoring} {Rule}.
\newblock \emph{Decision Analysis}, 10\penalty0 (4):\penalty0 292--304, December 2013.
\newblock ISSN 1545-8490, 1545-8504.
\newblock \doi{10.1287/deca.2013.0280}.
\newblock URL \url{https://pubsonline.informs.org/doi/10.1287/deca.2013.0280}.

\bibitem[Kvamme and Borgan(2019{\natexlab{b}})]{kvamme_brier_2019}
Haavard Kvamme and Ornulf Borgan.
\newblock The {Brier} {Score} under {Administrative} {Censoring}: {Problems} and {Solutions}, December 2019{\natexlab{b}}.
\newblock URL \url{http://arxiv.org/abs/1912.08581}.
\newblock arXiv:1912.08581 [cs, stat].

\bibitem[Schoop et~al.(2011)Schoop, Beyersmann, Schumacher, and Binder]{schoop_quantifying_2011}
Rotraut Schoop, Jan Beyersmann, Martin Schumacher, and Harald Binder.
\newblock Quantifying the predictive accuracy of time-to-event models in the presence of competing risks.
\newblock \emph{Biometrical Journal}, 53\penalty0 (1):\penalty0 88--112, February 2011.
\newblock ISSN 03233847.
\newblock \doi{10.1002/bimj.201000073}.
\newblock URL \url{https://onlinelibrary.wiley.com/doi/10.1002/bimj.201000073}.

\bibitem[Gneiting and Raftery(2007)]{gneiting_strictly_2007}
Tilmann Gneiting and Adrian~E Raftery.
\newblock Strictly {Proper} {Scoring} {Rules}, {Prediction}, and {Estimation}.
\newblock \emph{Journal of the American Statistical Association}, 102\penalty0 (477):\penalty0 359--378, March 2007.
\newblock ISSN 0162-1459, 1537-274X.
\newblock \doi{10.1198/016214506000001437}.
\newblock URL \url{http://www.tandfonline.com/doi/abs/10.1198/016214506000001437}.

\bibitem[Ovcharov(2018)]{ovcharov_proper_2018}
Evgeni~Y. Ovcharov.
\newblock Proper scoring rules and {Bregman} divergence.
\newblock \emph{Bernoulli}, 24\penalty0 (1), February 2018.
\newblock ISSN 1350-7265.
\newblock \doi{10.3150/16-BEJ857}.
\newblock URL \url{https://projecteuclid.org/journals/bernoulli/volume-24/issue-1/Proper-scoring-rules-and-Bregman-divergence/10.3150/16-BEJ857.full}.

\bibitem[Danks and Yau(2022{\natexlab{b}})]{pmlr-v151-danks22a}
Dominic Danks and Christopher Yau.
\newblock Derivative-based neural modelling of cumulative distribution functions for survival analysis, 28--30 Mar 2022{\natexlab{b}}.
\newblock URL \url{https://proceedings.mlr.press/v151/danks22a.html}.

\bibitem[Foomani et~al.(2023)Foomani, Cooper, Greiner, and Krishnan]{foomani2023copulabased}
Ali Hossein~Gharari Foomani, Michael Cooper, Russell Greiner, and Rahul~G. Krishnan.
\newblock Copula-based deep survival models for dependent censoring, 2023.

\bibitem[Zhang et~al.(2023)Zhang, Ling, and Zhang]{zhang2023deep}
Weijia Zhang, Chun~Kai Ling, and Xuanhui Zhang.
\newblock Deep copula-based survival analysis for dependent censoring with identifiability guarantees, 2023.

\bibitem[Robins et~al.(1994)Robins, Rotnitzky, and Zhao]{robins_estimation_1994}
James~M. Robins, Andrea Rotnitzky, and Lue~Ping Zhao.
\newblock Estimation of {Regression} {Coefficients} {When} {Some} {Regressors} are not {Always} {Observed}.
\newblock \emph{Journal of the American Statistical Association}, 89\penalty0 (427):\penalty0 846--866, September 1994.
\newblock ISSN 0162-1459, 1537-274X.
\newblock \doi{10.1080/01621459.1994.10476818}.
\newblock URL \url{https://www.tandfonline.com/doi/full/10.1080/01621459.1994.10476818}.

\bibitem[Ishwaran et~al.(2014)Ishwaran, Gerds, Kogalur, Moore, Gange, and Lau]{ishwaran2014random}
Hemant Ishwaran, Thomas~A Gerds, Udaya~B Kogalur, Richard~D Moore, Stephen~J Gange, and Bryan~M Lau.
\newblock Random survival forests for competing risks.
\newblock \emph{Biostatistics}, 15\penalty0 (4):\penalty0 757--773, 2014.

\end{thebibliography}
\bibliographystyle{unsrtnat}

\newpage

\newpage
\appendix

 \xdef\presupfigures{\arabic{figure}}
  \xdef\presuptables{\arabic{table}}
\renewcommand\thefigure{S\fpeval{\arabic{figure}-\presupfigures}} 
\renewcommand\thetable{S\fpeval{\arabic{table}-\presuptables}} 

\onecolumn

\section{Definitions}
\subsection{Notations}

Here we detail the notations used in the main manuscript as well as in the proofs and derivations below.

For all symbols, we use the following conventions:
\begin{itemize}
    \item $.^*$: Oracle
    \item $\hat{.}$: Estimation
\end{itemize}

The different variables that we use are:
\begin{table}[ht]
\begin{tabularx}{\linewidth}{r r X}
\toprule
Maths Symbol & Domain & Description\\
\midrule 
    $\zeta$ & $\mathbb{R}_+$& Time horizon \\
\midrule
    $K$ & $\mathbb{N}^*$ & number of competing events (events of interest)\\ 
    $\mathbf{X}$ & $\mathcal{X}$& random variable representing an individual \\
    $T^*_k$ &$\mathbb{R}_+$& random variable when the event $k$ will occur\\
    $C$ &$\mathbb{R}_+$& random variable when the censoring will occur\\
    $T^*$ & $\mathbb{R}_+$ & $\min (T^*_1, T^*_2, ..., T^*_K)$\\
    $T$ & $\mathbb{R}_+$ & $\min (T, C)$ \\
    $\Delta^*$ &$[1, K]$&$\argmin\limits_{i \in [1, K]} (T^*_i)$\\
    $\Delta$ &$[0, K]$&$\argmin (C, T^*_1, T^*_2, ..., T^*_K)$\\
    \midrule
    S & $\mathcal{S}$ & Survival function\\
    F & $\mathcal{F}$ & Cumulative Incidence Function \\
    G & $\mathcal{G}$ & Censor function\\
    \midrule
    $n$ &$\mathbb{N}^*$& number of individuals in our observation \\
    $i$ &$[1, n]$& one observation\\
    $\mathbf{x}_i$ &$\mathcal{X}^n$& individuals observed \\
    $t_i$ &$\mathbb{R}_+^n$& time-to-event/censoring observed\\
    $\delta_i$ & $[0, K]$& event observed, 0 means censoring \\
\bottomrule
\end{tabularx}
\caption{Notations used}
\label{tab:history}
\end{table}

\subsection{Reporting conventions}

In tables, the best results are reported in bold characters, and the second best is underlined.
\section{Theory on our proper scoring rule: proofs and derivations}
In this appendix, we give the proofs and derivations concerning the proper scoring rule that we have introduced.

\begin{proof}[Proof the of Lemma \ref{lem:usefulllemma} on the expectation of the Reweighted NLL]\label{prooflemma}
\usefullemma*

\begin{multline}
    \forall \zeta, \forall k \in \llbracket1, K\rrbracket, (\mathbf{x}, t, \delta) \sim \mathcal{D} \\
    \mathrm{L}_{\zeta}(\hat{F}(\zeta| \mathbf{x}), (t, \delta)) \defeq \frac{1}{n} \sum_{i=1}^n \left(\sum_{k=1}^{K}
    \dfrac{
        \mathbb{1}_{t_i \leq \zeta, \delta_i = k} ~~\log\left(\hat{F}_k(\zeta|\mathbf{x}_i)\right)
        }
        {
        \tikzmarknode{censure_i_eval}{\highlight{orange}
            {$G^*(t_i|\mathbf{x}_i) $}
            }} \right)
        +
        \dfrac{
        \mathbb{1}_{t_i > \zeta} ~~ \log\left(\hat{S}(\zeta|\mathbf{x}_i)\right)
        }
        {
        \tikzmarknode{censure_t_eval}{\highlight{cyan}
            {$G^*(\zeta|\mathbf{x}_i) $}
        }}
\end{multline}

\paragraph{Derivation of the expectation}:
\begin{align*}
    \mathbb{E}_{T, \Delta |\mathbf{X} =\mathbf{x}}\left[\Psi_{k, \zeta}(\hat{F}_k(\zeta| \mathbf{x}), ( T, \Delta))|\mathbf{X}=\mathbf{x}\right] &= 
    \mathbb{E}_{T, \Delta |\mathbf{X} =\mathbf{x}}\left[ 
                    \mathbb{1}_{T \leq \zeta} \mathbb{1}_{\Delta = k}  
                    \dfrac{\log\left(\hat{F}_k(\zeta|\mathbf{x})\right)}{G^*(T|\mathbf{x})}  
                    \right] \\
    &=\log\left(\hat{F}_k(\zeta|\mathbf{x})\right) ~\mathbb{E}_{T, \Delta |\mathbf{X} =\mathbf{x}}\left[
                    \dfrac{
                    \mathbb{1}_{T^* \leq \zeta} \mathbb{1}_{T^* \leq C^*} 
                    \mathbb{1}_{\Delta = k} 
                    }
                    {
                    G^*(T|\mathbf{x})
                    }
                    \right] \\
    &= \log\left(\hat{F}_k(\zeta|\mathbf{x})\right)
        \mathbb{P}(T^*\leq \zeta, \Delta = k | \mathrm{X}= \mathrm{x}) \\
\end{align*}
And: 
\begin{align*}
    \mathbb{E}_{T, \Delta |\mathbf{X} =\mathbf{x}}\left[\Lambda_{k, \zeta}(\hat{S}(\zeta| \mathbf{X}= \mathbf{x}), (T, \Delta))|\mathbf{X}=\mathbf{x}\right] &= 
    \mathbb{E}_{T, \Delta |\mathbf{X} =\mathbf{x}}\left[
                    \mathbb{1}_{T>\zeta}
                    \dfrac{\log\left(\hat{S}(\zeta| \mathbf{X}= \mathbf{x})\right)}{G^*(\zeta|\mathbf{x})}  
                    \right] \\
        &= \log\left(\hat{S}(\zeta| \mathbf{X}= \mathbf{x})\right) \mathbb{E}_{T, \Delta |\mathbf{X} =\mathbf{x}}\left[ 
                    \dfrac{
                    \mathbb{1}_{T^*>\zeta}
                    \mathbb{1}_{C^*>\zeta}
                    }
                    {\mathbb{P}(C^*>\zeta|\mathbf{x})} 
                    \right]\\
        &= \log\left(\hat{S}(\zeta| \mathbf{X}= \mathbf{x})\right) \mathbb{P}(T^*>\zeta| \mathbf{X} = \mathbf{x})\\
\end{align*}

By summing all of the terms, we obtain: 

\begin{align}
    \mathbb{E}_{T, \Delta |\mathbf{X} =\mathbf{x}}\left[\mathrm{L}_{ \zeta}\left(\hat{F}(\zeta|\mathbf{x}), ( T, \Delta)\right)\right] &= 
    \sum_{k=1}^K \log\left(\hat{F}_k(\zeta|\mathbf{x}), 1\right)
        \mathbb{P}(T^*\leq \zeta, \Delta = k)  
     + \log\left(\hat{S}(\zeta| \mathbf{X}= \mathbf{x})\right)
        \left(\mathbb{P}(T^*>\zeta| \mathbf{X} = \mathbf{x})\right) \\
    &=\sum_{k=1}^K \log\left(\hat{F}_k(\zeta|\mathbf{x})\right)
        F^*_k(\zeta |\mathrm{x})  
     + \log\left(\hat{S}(\zeta|\mathbf{x})\right)
        S^*(\zeta| \mathbf{x})
\end{align}

Finally:
\begin{align}
    \mathbb{E}_{T, \Delta |\mathbf{X} =\mathbf{x}}\left[\mathrm{L}_{ \zeta}\left(\hat{F}(\zeta|\mathbf{x}), ( T, \Delta)\right)\right]
    &=\sum_{k=1}^K \log\left(\hat{F}_k(\zeta|\mathbf{x})\right)
        F^*_k(\zeta |\mathrm{x})  
     + \log\left(\hat{S}(\zeta|\mathbf{x})\right)
        S^*(\zeta| \mathbf{x})
\end{align}

\end{proof}

\begin{proof}[Proof of the Theorem \ref{thm:bigtheorem}]\label{psrweights}
\bigthm*
To be more explicit, we can define a new random variable $Y$: 
\begin{definition}
     $$\forall \zeta, ~ Y_{k, \zeta} \defeq T^* \leq \zeta \cap \Delta = k$$
     And:
     $$\forall \zeta, ~ Y_{0, \zeta} \defeq T^* > \zeta$$
\end{definition}  

\begin{equation}
    F^*_k(\zeta |\mathbf{x}) = \mathbb{P}(T^*\leq \zeta, \Delta = k| \mathbf{X} = \mathbf{x}) = \mathbb{P}(Y_{k, \zeta}=1| \mathbf{X} = \mathbf{x})
\end{equation}
\begin{equation}
    S^*(\zeta |\mathbf{x}) = \mathbb{P}(T^*> \zeta| \mathbf{X} = \mathbf{x}) = \mathbb{P}(Y_{0, \zeta}=1| \mathbf{X} = \mathbf{x})
\end{equation}

$\hat{F}_k(\zeta|\mathbf{x})$ represents the estimated probability for $Y_{k, \zeta}=1$,
so we can rewrite: $\hat{p}_{k, \zeta} \defeq \hat{F}_k(\zeta|\mathbf{x}) \approx \mathbb{P}(Y_{k, \zeta}=1| \mathbf{X} = \mathbf{x})$
Therefore:

\begin{flalign}
    \mathbb{E}_{T, \Delta |\mathbf{X} =\mathbf{x}}\left[\mathrm{L}_{k, \zeta}(\hat{F}_k(\zeta |\mathbf{x}), ( T, \Delta))\right] &=\mathbb{E}_{T, \Delta |\mathbf{X} =\mathbf{x}}[\mathrm{L}_{\zeta}(\hat{p}_{\zeta}, ( T, \Delta))] \\
    &= \sum_{k=1}^K \log\left(\hat{p}_{k, \zeta}\right)
        \mathbb{P}(Y_{k, \zeta}=1| \mathbf{X} = \mathbf{x})  
     + \log\left(\hat{p}_{0, \zeta}\right)
        \mathbb{P}(Y_{0, \zeta}=0| \mathbf{X} = \mathbf{x})
\end{flalign}
Thus:
\begin{equation}
\begin{aligned}
\min_{\hat{p}} \quad & \sum_{k=1}^K \log\left(\hat{p}_{k, \zeta}\right)
        \mathbb{P}(Y_{k, \zeta}=1| \mathbf{X} = \mathbf{x})  
     + \log\left(\hat{p}_{0, \zeta}\right)
        \mathbb{P}(Y_{0, \zeta}=1| \mathbf{X} = \mathbf{x})\\
\textrm{s.t.} \quad & \sum_{k=0}^K \hat{p}_k = 1\\
  &\hat{p}_k\geq0    \\
\end{aligned}
\end{equation}
is obtained for $\hat{p}= p^*$.
\begin{align*}
    \frac{p_k}{\hat{p}_k}  + \mu &=0 \\
    \sum_{k=0}^K\hat{p}_k &= 1\\
    \sum_{k=0}^K\frac{p_k}{-\mu}&= 1\\
    \mu &= -1\\
    \Longrightarrow \hat{p}_k &= a_k\\
\end{align*}

\end{proof}

\section{Study of the proper scoring rule used for evaluation}%
\label{sec:evaluation_psr}

As mentioned above, the metric most used in the competing risks setting,
the C-index in time, is biased \citep{blanche_c-index_2019, rindt_survival_2022}. To overcome this issue, which is major for any evaluation strategy, we propose here two evaluation metrics: one re-weighting proper scoring rule, that can be effective with any proper binary scoring rule. The second is the accuracy in time that measures the observed event versus the most likely predicted event.

\subsection{PSR for evaluation}
The PSR introduced in the main paper to be the loss of our model is a global loss over all of our predictions. The following loss is adapted to focus on a special event $k$ to evaluate our estimations on a specific event. In the paper, we chose to focus on the IBS, but one could use a logarithmic loss because of its properness.  
\paragraph{Proper scoring rule for the $k^{th}$ competing event}
In our setting, we will denote $L_{k, \zeta}$, a scoring rule for the $k^{th}$ CIF at a time horizon $\zeta$. 
\begin{definition}[\emph{PSR for the $k^{th}$ cause-specific event}]
The scoring rule $L_{k, \zeta}$ for the $k^{th}$ CIF at time $\zeta$ for an observation $(\mathbf{X}, T, \Delta)$ is proper if and only if:
 \begin{flalign}
   \forall \zeta, (\mathbf{X}, T, \Delta )\sim \mathcal{D}, ~~
    \mathbb{E}_{T, \Delta  | \mathbf{X}= \mathbf{x}}[L_{k, \zeta}(\hat{F}_k(\zeta| \mathbf{x}), (T, \Delta))]  \leq \mathbb{E}_{T, \Delta  | \mathbf{X}= \mathbf{x}}[L_{k, \zeta}(F^*_k(\zeta| \mathbf{x}), (T, \Delta))]
\end{flalign}
\end{definition}

\subsubsection{A Proper Scoring Rule for Competing Risks}
To evaluate our model, we used the following proper scoring rule is adequate for each event. Thanks to this proper scoring rule, we can understand the error for each event and the global error of all of the CIF. 

In the following, we prove that any given (strictly) proper scoring rule that can be used in the multiclass setting (\emph{e.g.} the Brier score, the negative log-likelihood) leads to a (strictly) proper scoring in competing risks settings thanks to the re-weighting of the observations. \\
Indeed, for any (strictly) proper scoring rule $\ell: \mathbb{R} \times \{0,1\} \rightarrow \mathbb{R}$, one can build a cause-specific scoring rule function $L_{k, \zeta}:  \mathbb{R} \times \mathcal{D} \rightarrow \mathbb{R}$ that is also a (strictly) proper scoring rule for the cause-specific event $k^{th}$ in the fixed time horizon $\zeta \in \mathbb{R}_+$. It follows that $L_{\zeta}$ is (strictly) proper. 
\begin{definition}[\emph{PSR with re-weighting}]
We define $L_{k,\zeta}$, considering the observations $(\mathbf{x}, t, \delta)$ and for an event $k$, the following scoring rule of the $k^{th}$ CIF:
\begin{multline}
    \forall \zeta, \forall k \in \llbracket1, K\rrbracket, \ell: \mathbb{R} \times \{0,1\}\rightarrow \mathbb{R}, (\mathbf{x}, t, \delta) \sim \mathcal{D} \\
    \mathrm{L}_{k, \zeta}(\hat{F}_k(\zeta| \mathbf{x}), (t, \delta)) \defeq \frac{1}{n} \sum_{i=1}^n 
    \dfrac{
        \mathbb{1}_{t_i \leq \zeta, \delta_i = k} ~~\ell\left(\hat{F}_k(\zeta|\mathbf{x}_i), 1\right)
        }
        {
        \tikzmarknode{censure_ie1}{\highlight{orange}
            {$G^*(t_i|\mathbf{x}_i) $}
            }} \\
        +
        \dfrac{
        \mathbb{1}_{t_i > \zeta} ~~ \ell\left(\hat{F}_k(\zeta|\mathbf{x}_i), 0\right)
        }
        {
        \tikzmarknode{censure_te}{\highlight{cyan}
            {$G^*(\zeta|\mathbf{x}_i) $}
        }} \\
    + 
        \dfrac{
        \mathbb{1}_{t_i \leq \zeta, \delta_i \neq 0, \delta_i \neq k} ~~ \ell\left(  \hat{F}_k(\zeta|\mathbf{x}_i), 0\right)
        }
        {
        \tikzmarknode{censure_ie}{\highlight{orange}
            {$G^*(t_i|\mathbf{x}_i) $}
            }} 
\end{multline}%
\begin{tikzpicture}[overlay,remember picture,>=stealth,nodes={align=left,inner ysep=1pt},<-]
     \path (censure_ie.south) ++ (-10em, 1em) node[anchor=east,color=orange!67] (titlecensi){\text{Probability of remaining at $t_i$}};
     \draw [color=orange!87](censure_ie.west) -- ([xshift=-0.1ex,color=orange]titlecensi.east);
    \path (censure_te.south) ++ (-8em,1em) node[anchor=east,color=cyan!67] (censi){\text{\parbox{25ex}{Probability  of remaining at $\zeta$ \\ \small (1 - probability of  censoring)}}};
     \draw [color=cyan!87](censure_te.west) -- ([xshift=-0.1ex,color=cyan]censi.east);
\end{tikzpicture}
The weights correspond to the Inverse Probability of Censoring Weighting
(IPCW) used to re-calibrate the observed population to align with the
uncensored oracle population \cite{robins_estimation_1994}. This PSR is
an extension of \citet{graf_assessment_1999} and
\citet{schoop_quantifying_2011} when $\ell$ is the Brier Score.
\end{definition} 

\begin{lemma}\label{expectation}
    Considering a proper scoring rule $\ell:\mathbb{R} \times \{0,1\}$, at time horizon $\zeta$ and for any cause-specific risk $k$, the expectation of the former scoring rule can be written as: 
    \begin{multline}
        \forall \zeta, \forall k \in \llbracket1, K\rrbracket, \ell: \mathbb{R} \times \{0,1\}\rightarrow \mathbb{R}, (\mathbf{X}, T, \Delta) \sim \mathcal{D}, \\
        \mathbb{E}_{T, \Delta |\mathbf{X} =\mathbf{x}}\left[\mathrm{L}_{k, \zeta}\left(\hat{F}_k(\zeta|\mathbf{x}), (T, \Delta)\right)\right] = 
    \ell\left(\hat{F}_k(\zeta|\mathbf{x}), 1\right)
        F^*_k(\zeta|\mathbf{x}) 
     + \ell\left(\hat{F}_k(\zeta|\mathbf{x}), 0\right)
        \left(1 - F^*_k(\zeta|\mathbf{x})\right)
    \end{multline}
\end{lemma}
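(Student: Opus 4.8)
`The plan is to compute the conditional expectation term by term, exactly as in the proof of Lemma~\ref{lem:usefulllemma}, but now tracking the generic proper scoring rule $\ell$ instead of the logarithm.`

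The plan is to compute the conditional expectation term by term, exactly as in the proof of Lemma~\ref{lem:usefulllemma}, but now carrying the generic proper scoring rule $\ell$ in place of the logarithm. Fix $\mathbf{x}$ and a single observation $(\mathbf{X},T,\Delta)\sim\mathcal{D}$ conditioned on $\mathbf{X}=\mathbf{x}$, and split $\mathrm{L}_{k,\zeta}$ into its three summands: the ``event $k$ before $\zeta$'' term $\mathbb{1}_{T\le\zeta,\Delta=k}\,\ell(\hat F_k(\zeta|\mathbf{x}),1)/G^*(T|\mathbf{x})$, the ``survived past $\zeta$'' term $\mathbb{1}_{T>\zeta}\,\ell(\hat F_k(\zeta|\mathbf{x}),0)/G^*(\zeta|\mathbf{x})$, and the ``competing event $j\neq k$ before $\zeta$'' term $\mathbb{1}_{T\le\zeta,\Delta\neq0,\Delta\neq k}\,\ell(\hat F_k(\zeta|\mathbf{x}),0)/G^*(T|\mathbf{x})$. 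In each region the second argument of $\ell$ is the fixed binary label ($1$ or $0$), so $\ell(\hat F_k(\zeta|\mathbf{x}),\cdot)$ pulls out of the expectation and what remains is to evaluate three probabilities.

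First I would rewrite each indicator in terms of the oracle variables: $\{T\le\zeta,\Delta=k\}=\{T^*\le\zeta,\ \Delta^*=k,\ T^*\le C\}$, $\{T>\zeta\}=\{T^*>\zeta\}\cap\{C>\zeta\}$, and $\{T\le\zeta,\Delta\neq0,\Delta\neq k\}=\{T^*\le\zeta,\ \Delta^*\neq k,\ T^*\le C\}$. For the first and third terms, conditioning on $(T^*,\Delta^*)$ and invoking Assumption~\ref{info_censoring} gives $\mathbb{E}[\mathbb{1}_{T^*\le C}\mid T^*,\mathbf{x}]=G^*(T^*|\mathbf{x})$, which cancels the weight and leaves $\mathbb{P}(T^*\le\zeta,\Delta^*=k\mid\mathbf{x})=F^*_k(\zeta|\mathbf{x})$ and $\mathbb{P}(T^*\le\zeta,\Delta^*\neq k\mid\mathbf{x})=F^*(\zeta|\mathbf{x})-F^*_k(\zeta|\mathbf{x})$ respectively. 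For the second term, independence of $C$ and $T^*$ gives $\mathbb{P}(T>\zeta\mid\mathbf{x})=S^*(\zeta|\mathbf{x})G^*(\zeta|\mathbf{x})$, so the weight again cancels and the survival region contributes $S^*(\zeta|\mathbf{x})$.

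Collecting terms, the $\ell(\hat F_k(\zeta|\mathbf{x}),1)$ coefficient is $F^*_k(\zeta|\mathbf{x})$, and the two $\ell(\hat F_k(\zeta|\mathbf{x}),0)$ coefficients sum to $S^*(\zeta|\mathbf{x})+F^*(\zeta|\mathbf{x})-F^*_k(\zeta|\mathbf{x})$. Since $\{T^*\le\zeta\}$ and $\{T^*>\zeta\}$ partition the sample space, $S^*(\zeta|\mathbf{x})+F^*(\zeta|\mathbf{x})=1$, and this collapses to $1-F^*_k(\zeta|\mathbf{x})$, yielding the claimed identity.

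The main obstacle — really the only non-routine point — is the weight-cancellation step: one must verify that in each of the three disjoint regions the IPCW factor $1/G^*(\cdot|\mathbf{x})$ exactly converts the observed-data expectation into an oracle-data expectation, which uses Assumption~\ref{info_censoring} and the fact that the weights are the true $G^*$. The new ingredient relative to Lemma~\ref{lem:usefulllemma} is the third summand handling competing events $j\neq k$, together with the recombination $S^*+F^*-F^*_k=1-F^*_k$; with continuous event times the tie set $\{T^*=C\}$ has probability zero, so the $\le$ versus $<$ distinction in the indicators is immaterial.
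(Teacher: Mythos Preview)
Your proof is correct and follows essentially the same approach as the paper: split $\mathrm{L}_{k,\zeta}$ into the three indicator terms, use Assumption~\ref{info_censoring} so that the IPCW weights cancel against the censoring probabilities in each region, and then recombine the two $\ell(\cdot,0)$ coefficients into $1-F^*_k(\zeta|\mathbf{x})$. Your final recombination $S^*+F^*-F^*_k=1-F^*_k$ is a slightly cleaner variant of the paper's inclusion--exclusion computation of $\mathbb{P}(\overline{T^*\le\zeta\cap\Delta=k})$, but the argument is otherwise the same.
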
 

\begin{proof}
    \begin{multline}
    \forall \zeta, \forall k \in \llbracket1, K\rrbracket, \ell: \mathbb{R} \times \{0,1\}\rightarrow \mathbb{R}, (\mathbf{x}, t, \delta) \sim \mathcal{D} \\
    \mathrm{L}_{k, \zeta}(\hat{F}_k(\zeta| \mathbf{x}), (t, \delta)) \defeq \frac{1}{n} \sum_{i=1}^n 
    \underbrace{
    \dfrac{
        \mathbb{1}_{t_i \leq \zeta, \delta_i = k} ~~\ell\left(\hat{F}_k(\zeta|\mathbf{x}_i), 1\right)
        }
        {G^*(t_i|\mathbf{x}_i)}
    }_{\defeq \Psi_{k, \zeta}(\hat{F}_k(\zeta| \mathbf{x}), ( t, \delta))} \\
        +
    \underbrace{
        \dfrac{
        \mathbb{1}_{t_i > \zeta} ~~ \ell\left(\hat{F}_k(\zeta|\mathbf{x}_i), 0\right)
        }
        {G^*(\zeta|\mathbf{x}_i)} 
    }_{\defeq \Lambda_{k, \zeta}(\hat{F}_k(\zeta| \mathbf{x}), ( t, \delta))} \\
    + 
    \underbrace{
        \dfrac{
        \mathbb{1}_{t_i \leq \zeta, \delta_i \neq 0, \delta_i \neq k} ~~ \ell\left(  \hat{F}_k(\zeta|\mathbf{x}_i), 0\right)
        }
        {G^*(t_i|\mathbf{x}_i)} 
    }_{\defeq \Phi_{k, \zeta}(\hat{F}_k(\zeta| \mathbf{x}), ( t, \delta))} 
\end{multline}
\begin{align*}
    \mathbb{E}_{T, \Delta |\mathbf{X} =\mathbf{x}}\left[\Psi_{k, \zeta}(\hat{F}_k(\zeta| \mathbf{x}), ( T, \Delta))|\mathbf{X}=\mathbf{x}\right] &= 
    \mathbb{E}_{T, \Delta |\mathbf{X} =\mathbf{x}}\left[ 
                    \mathbb{1}_{T \leq \zeta} \mathbb{1}_{\Delta = k}  
                    \dfrac{\ell\left(\hat{F}_k(\zeta|\mathbf{x}), 1\right)}{G^*(T|\mathbf{x})}  
                    \right] \\
    &=\ell\left(\hat{F}_k(\zeta|\mathbf{x}), 1\right) ~\mathbb{E}_{T, \Delta |\mathbf{X} =\mathbf{x}}\left[
                    \dfrac{
                    \mathbb{1}_{T^* \leq \zeta} \mathbb{1}_{T^* \leq C^*} 
                    \mathbb{1}_{\Delta = k} 
                    }
                    {
                    G^*(T|\mathbf{x})
                    }
                    \right] \\
    &= \ell\left(\hat{F}_k(\zeta|\mathbf{x}), 1\right)
        \mathbb{P}(T^*\leq \zeta, \Delta = k | \mathrm{X}= \mathrm{x}) \\
\end{align*}

\begin{align*}
    \mathbb{E}_{T, \Delta |\mathbf{X} =\mathbf{x}}\left[\Phi_{k, \zeta}\left(\hat{F}_k(\zeta| \mathbf{x}), ( T, \Delta)\right)\right] &= 
    \mathbb{E}_{T, \Delta |\mathbf{X} =\mathbf{x}}\left[ 
                    \mathbb{1}_{T\leq \zeta, \Delta \neq 0, \Delta \neq k}
                    \dfrac{\ell\left(\hat{F}_k(\zeta|\mathbf{x}), 0\right)}{G^*(T|\mathbf{x})}  
                    \right] \\
    &=\ell\left(\hat{F}_k(\zeta|\mathbf{x}), 0\right) ~\mathbb{E}_{T, \Delta |\mathbf{X} =\mathbf{x}}\left[ 
                    \dfrac{
                    \mathbb{1}_{T^* \leq \zeta} \mathbb{1}_{T^* \leq C^*} 
                    \mathbb{1}_{\Delta \neq k} 
                    }
                    {
                    G^*(T|\mathbf{x})
                    }
                    \right] \\
    &= \ell\left(\hat{F}_k(\zeta|\mathbf{x}), 0\right)
        \mathbb{P}(T^*\leq \zeta, \Delta \neq k | \mathrm{X}= \mathrm{x})\\
\end{align*}

\begin{align*}
    \mathbb{E}_{T, \Delta |\mathbf{X} =\mathbf{x}}\left[\Lambda_{k, \zeta}(\hat{F}_k(\zeta, \mathbf{x}), (T, \Delta))|\mathbf{X}=\mathbf{x}\right] &= 
    \mathbb{E}_{T, \Delta |\mathbf{X} =\mathbf{x}}\left[
                    \mathbb{1}_{T>\zeta}
                    \dfrac{\ell\left(1 - \hat{F}_k(\zeta|\mathbf{x}), 0\right)}{G^*(\zeta|\mathbf{x})}  
                    \right] \\
        &= \ell\left(\hat{F}_k(\zeta|\mathbf{x}), 0\right) \mathbb{E}_{T, \Delta |\mathbf{X} =\mathbf{x}}\left[] 
                    \dfrac{
                    \mathbb{1}_{T^*>\zeta}
                    \mathbb{1}_{C^*>\zeta}
                    }
                    {\mathbb{P}(C^*>\zeta|\mathbf{x})} 
                    \right]\\
        &= \ell\left(\hat{F}_k(\zeta|\mathbf{x}), 0\right) \mathbb{P}(T^*>\zeta| \mathbf{X} = \mathbf{x})\\
\end{align*}

By summing all of the terms, we obtain: 
\begin{equation}
\begin{multlined}
    \mathbb{E}_{T, \Delta |\mathbf{X} =\mathbf{x}}\left[\mathrm{L}_{k, \zeta}\left(\hat{F}_k(\zeta|\mathbf{x}), ( T, \Delta)\right)\right] = 
    \ell\left(\hat{F}_k(\zeta|\mathbf{x}), 1\right)
        \mathbb{P}(T^*\leq \zeta, \Delta = k) \\ 
     + \ell\left(\hat{F}_k(\zeta|\mathbf{x}), 0\right)
        \left( \mathbb{P}(T^*\leq \zeta, \Delta \neq k | \mathrm{X}= \mathrm{x})+ \mathbb{P}(T^*>\zeta| \mathbf{X} = \mathbf{x})\right)
\end{multlined}
\end{equation}

Meanwhile, 
\begin{flalign}
    \mathbb{P}(\overline{T^* \leq \zeta \cap \Delta = k}) &= \mathbb{P}(T^* > \zeta \cup \Delta \neq k) \\
    &= \mathbb{P}(T^* > \zeta) + \mathbb{P}(\Delta \neq k) - \mathbb{P}(T^* > \zeta \cap \Delta \neq k) \\ 
    &= \mathbb{P}(T^* > \zeta) + \mathbb{P}(\Delta \neq k \cap T^* > \zeta) + \mathbb{P}(\Delta \neq k \cap T^* \leq \zeta) - \mathbb{P}(T^* > \zeta \cap \Delta \neq k) \\ 
    &= \mathbb{P}(T^* > \zeta) + \mathbb{P}(\Delta \neq k \cap T^* \leq \zeta)
\end{flalign}
So, we obtain:
\begin{equation}
    \mathbb{E}_{T, \Delta |\mathbf{X} =\mathbf{x}}\left[\mathrm{L}_{k, \zeta}\left(\hat{F}_k(\zeta|\mathbf{x}), ( T, \Delta)\right)\right]= 
    \ell\left(\hat{F}_k(\zeta|\mathbf{x}), 1\right)
        F^*_k(\zeta|\mathbf{x})
     + \ell\left(\hat{F}_k(\zeta|\mathbf{x}), 0\right)
        \left(1 - F^*_k(\zeta|\mathbf{x})\right)
\end{equation}
\end{proof}

\begin{proposition}\label{psr}
    If $\ell: \mathbb{R} \times \{0,1\} \rightarrow \mathbb{R}$, a chosen (strictly) proper scoring rule, then $L_{k, \zeta}:  \mathbb{R} \times \mathcal{D} \rightarrow \mathbb{R}$ is a (strictly) proper scoring rule for the cause-specific event $k^{th}$ in the fixed time horizon $\zeta \in \mathbb{R}_+$.
\end{proposition}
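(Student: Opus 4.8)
The plan is to deduce the proposition directly from Lemma~\ref{expectation}, which already does the substantive work: it rewrites the conditional expectation of the censored, IPCW-reweighted score as a quantity built only from the oracle CIF $F^*_k(\zeta|\mathbf{x})$ and the two atoms $\ell(\cdot,1)$, $\ell(\cdot,0)$ of the underlying binary rule. So the only remaining task is a short reduction to the assumed properness of $\ell$.

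First I would fix $\mathbf{x}$, $\zeta$, $k$ and abbreviate $p^\star \defeq F^*_k(\zeta|\mathbf{x})$ and $\hat p \defeq \hat F_k(\zeta|\mathbf{x})$, both lying in $[0,1]$. Lemma~\ref{expectation} gives
\[
\mathbb{E}_{T,\Delta|\mathbf{X}=\mathbf{x}}\!\left[\mathrm{L}_{k,\zeta}(\hat p,(T,\Delta))\right] = \ell(\hat p,1)\,p^\star + \ell(\hat p,0)\,(1-p^\star),
\]
and the key observation is that the right-hand side is exactly $\mathbb{E}_{Y\sim\mathrm{Bernoulli}(p^\star)}[\ell(\hat p, Y)]$, identifying a probability $q\in[0,1]$ with the distribution $\mathrm{Bernoulli}(q)$ in the sense of Definition~\ref{def:proper}. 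The same identity with $\hat p$ replaced by $p^\star$ gives $\mathbb{E}_{T,\Delta|\mathbf{X}=\mathbf{x}}[\mathrm{L}_{k,\zeta}(p^\star,(T,\Delta))] = \mathbb{E}_{Y\sim\mathrm{Bernoulli}(p^\star)}[\ell(p^\star,Y)]$.

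Then I would apply the properness of $\ell$ (Definition~\ref{def:proper}) with $\mathcal Q = \mathrm{Bernoulli}(p^\star)$ and $\mathcal P = \mathrm{Bernoulli}(\hat p)$, which yields $\mathbb{E}_{Y\sim\mathrm{Bernoulli}(p^\star)}[\ell(\hat p,Y)] \le \mathbb{E}_{Y\sim\mathrm{Bernoulli}(p^\star)}[\ell(p^\star,Y)]$; chaining this with the two identities above gives
\[
\mathbb{E}_{T,\Delta|\mathbf{X}=\mathbf{x}}\!\left[\mathrm{L}_{k,\zeta}(\hat F_k(\zeta|\mathbf{x}),(T,\Delta))\right] \le \mathbb{E}_{T,\Delta|\mathbf{X}=\mathbf{x}}\!\left[\mathrm{L}_{k,\zeta}(F^*_k(\zeta|\mathbf{x}),(T,\Delta))\right],
\]
which is precisely the defining inequality for $L_{k,\zeta}$ to be a proper scoring rule for the $k$th cause-specific event at horizon $\zeta$. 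For the strict case I would note that a Bernoulli law is determined by its parameter, so $\mathcal P = \mathcal Q$ iff $\hat p = p^\star$; hence strict properness of $\ell$ forces equality above to hold iff $\hat F_k(\zeta|\mathbf{x}) = F^*_k(\zeta|\mathbf{x})$. Summing the cause-specific scores over $k$ (with the survival index included) then transfers (strict) properness to the global $L_\zeta$.

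Since Lemma~\ref{expectation} is assumed, there is no real obstacle here; the only point to state carefully is the reduction to a Bernoulli observation — one must check $p^\star,\hat p\in[0,1]$ so that $\ell$ applies verbatim, and that the ``if and only if'' in strict properness is read at the level of the induced Bernoulli distributions rather than of the raw censored observations $(T,\Delta)$. The genuine difficulty of the whole construction lives entirely in Lemma~\ref{expectation}, i.e.\ in the IPCW reweighting that converts the observed time $T$ into the uncensored $T^\star$.
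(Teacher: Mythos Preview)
Your proof is correct and follows essentially the same route as the paper: invoke Lemma~\ref{expectation} to reduce the censored expectation to $\ell(\hat p,1)p^\star + \ell(\hat p,0)(1-p^\star)$, recognize this as the expected score of $\ell$ under a Bernoulli$(p^\star)$ observation (the paper names this variable $Y_{k,\zeta}$ explicitly), and apply the assumed (strict) properness of $\ell$. The only cosmetic difference is that you phrase the auxiliary variable as a Bernoulli distribution rather than defining $Y_{k,\zeta}$ by name, and your closing remark about summing over $k$ is really the content of the subsequent Theorem~\ref{gpsr} rather than of this proposition.
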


\begin{proof}
    \begin{equation}
\begin{multlined}
    \mathbb{E}_{T, \Delta |\mathbf{X} =\mathbf{x}}\left[\mathrm{L}_{k, \zeta}\left(\hat{F}_k(\zeta|\mathbf{x}), ( T, \Delta)\right)\right] = 
    \ell\left(\hat{F}_k(\zeta|\mathbf{x}), 1\right)
        \mathbb{P}(T^*\leq \zeta, \Delta = k | \mathbf{X}= \mathbf{x}) \\ 
     + \ell\left(\hat{F}_k(\zeta|\mathbf{x}), 0\right)
        \left( \mathbb{P}(T^*\leq \zeta, \Delta \neq k | \mathbf{X}= \mathbf{x})+ \mathbb{P}(T^*>\zeta| \mathbf{X} = \mathbf{x})\right)
\end{multlined}
\end{equation}

To be more explicit, we can define a new random variable $Y$: 
\begin{definition}
     $$\forall \zeta, ~ Y_{k, \zeta} \defeq T^* \leq \zeta \cap \Delta = k$$
\end{definition} 

\begin{equation}
    F^*_k(\zeta |\mathbf{x}) = \mathbb{P}(T^*\leq \zeta, \Delta = k| \mathbf{X} = \mathbf{x}) = \mathbb{P}(Y_{k, \zeta}=1| \mathbf{X} = \mathbf{x})
\end{equation}
$\hat{F}_k(\zeta|\mathbf{x})$ represents the estimated probability for $Y_{k, \zeta}=1$,
so we can rewrite: $\hat{p}_{k, \zeta} \defeq \hat{F}_k(\zeta|\mathbf{x}) \approx \mathbb{P}(Y_{k, \zeta}=1| \mathbf{X} = \mathbf{x})$
Therefore: 
\begin{flalign}
    \mathbb{E}_{T, \Delta |\mathbf{X} =\mathbf{x}}\left[\mathrm{L}_{k, \zeta}(\hat{F}_k(\zeta |\mathbf{x}), ( T, \Delta))\right] &=\mathbb{E}_{T, \Delta |\mathbf{X} =\mathbf{x}}[\mathrm{L}_{k,\zeta}(\hat{p}_{k, \zeta}, ( T, \Delta))] \\
    &=\ell\left(\hat{p}_{k, \zeta}, 0\right)
        \mathbb{P}(Y_{k, \zeta}=0| \mathbf{X} = \mathbf{x})+
    \ell\left(\hat{p}_{k, \zeta}, 1\right)
         \mathbb{P}(Y_{k, \zeta}=1| \mathbf{X} = \mathbf{x}) \\
    &= \mathbb{E}_{Y_{k, \zeta}}[\ell(\hat{p}_{k, \zeta}, Y_{k, \zeta})| \mathbf{X} = \mathbf{x}] \\
    &\leq \mathbb{E}_{Y_{k, \zeta}}[\ell(p_{k, \zeta}, Y_{k, \zeta})| \mathbf{X} = \mathbf{x}]\\
    &\leq \mathbb{E}_{T, \Delta |\mathbf{X} =\mathbf{x}}[\mathrm{L}_{k, \zeta}(\mathbb{P}(Y_{k, \zeta}=1), ( T, \Delta))] \\
    &\leq \mathbb{E}[\mathrm{L}_{k, \zeta}(F^*_k(\zeta |\mathrm{x}), ( T, \Delta))]
\end{flalign}

The last inequality is valid because $l$ is a proper scoring rule.  The same computation leads to a strictly proper scoring rule if $l$ is a strictly proper scoring rule. \\

So, we obtain that $\forall \zeta, \forall k \in \llbracket1, K\rrbracket, ~ \mathrm{L}_{k, \zeta}(\hat{F}_k(\zeta |\mathrm{x}), ( T, \Delta))$ is a proper scoring rule of $F_k^*(\zeta | \mathbf{x})$. \\
\end{proof}

\begin{theorem}\label{gpsr}
    If $\ell: \mathbb{R} \times \{0,1\} \rightarrow \mathbb{R}$, a chosen (strictly) proper scoring rule, thus $L_{\zeta}:  \mathbb{R} \times \mathcal{D} \rightarrow \mathbb{R}$ is a (strictly) proper scoring rule for the global CIF at a fixed time horizon $\zeta \in \mathbb{R}_+$.
\end{theorem}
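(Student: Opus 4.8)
The plan is to reduce the claim to the cause-specific result of Proposition~\ref{psr}. Here the global rule $L_\zeta$ is the sum over events of the reweighted cause-specific rules, $L_\zeta = \sum_{k=1}^K L_{k,\zeta}$, and the $k$-th summand depends only on the coordinate $\hat F_k(\zeta|\mathbf{x})$; the remaining survival coordinate is pinned down by the simplex constraint $\hat S(\zeta|\mathbf{x}) = 1-\sum_{k=1}^K \hat F_k(\zeta|\mathbf{x})$. So the first step is to take the conditional expectation, use linearity, and apply Lemma~\ref{expectation} to each term. Writing $\hat F_k$ for $\hat F_k(\zeta|\mathbf{x})$, $F^*_k$ for $F^*_k(\zeta|\mathbf{x})$, and $\hat S$ for $\hat S(\zeta|\mathbf{x})$, this yields
\begin{align*}
\mathbb{E}_{T,\Delta|\mathbf{X}=\mathbf{x}}\!\left[L_\zeta\big((\hat F_1,\dots,\hat F_K,\hat S),(T,\Delta)\big)\right]
&= \sum_{k=1}^K\Big(\ell(\hat F_k,1)\,F^*_k+\ell(\hat F_k,0)\,(1-F^*_k)\Big)\\
&= \sum_{k=1}^K \mathbb{E}_{Y_{k,\zeta}}\!\left[\ell(\hat F_k,Y_{k,\zeta})\,|\,\mathbf{X}=\mathbf{x}\right],
\end{align*}
where $Y_{k,\zeta}=\mathbb{1}_{T^*\le\zeta,\,\Delta=k}$ is Bernoulli with parameter $F^*_k$.

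Next I would invoke the (strict) properness of the binary rule $\ell$: for each fixed $k$, the map $\hat F_k\mapsto \mathbb{E}_{Y_{k,\zeta}}[\ell(\hat F_k,Y_{k,\zeta})\,|\,\mathbf{X}=\mathbf{x}]$ is optimized at $\hat F_k=F^*_k$, and uniquely so when $\ell$ is strictly proper --- this is exactly Proposition~\ref{psr}, read one coordinate at a time.

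The last step is to pass from coordinatewise optimality of each summand to optimality of the whole sum over the admissible set of Definition~\ref{def:psrcr}, i.e.\ over tuples with $\hat F_k,\hat S\ge 0$ and $\sum_k\hat F_k+\hat S=1$. Since the $K$ summands act on pairwise-disjoint arguments, the unconstrained optimum of the sum is attained at the tuple of individual optima $(F^*_1,\dots,F^*_K)$; this tuple is admissible, because $\sum_{k=1}^K F^*_k=F^*(\zeta|\mathbf{x})=1-S^*(\zeta|\mathbf{x})\in[0,1]$ with each $F^*_k\ge 0$, so it is the constrained optimum as well, with the same uniqueness in the strictly proper case. At that optimum the survival coordinate equals $\hat S=1-\sum_k F^*_k=S^*(\zeta|\mathbf{x})$, so the oracle survival function is recovered too, and $L_\zeta$ satisfies Definition~\ref{def:psrcr}: it is a (strictly) proper scoring rule for the global CIF at horizon $\zeta$.

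I expect the only delicate point to be this decoupling/feasibility argument: one must justify that optimizing the sum over the probability simplex is equivalent to optimizing each term on its own, which uses both the disjointness of the coordinates and the fact that the coordinatewise optimum already lies inside the constraint set (so that no Lagrange-multiplier interaction across events occurs, unlike in the proof of Theorem~\ref{thm:bigtheorem}). One should also state precisely the domain on which $\ell$ is assumed (strictly) proper --- it must contain both the range of the predicted CIF values and the oracle values $F^*_k(\zeta|\mathbf{x})$ --- which is where the hypothesis that the IPCW weights are correctly specified, via Lemma~\ref{expectation}, re-enters.
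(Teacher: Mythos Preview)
Your proposal is correct and follows the same route as the paper: the paper's own proof is a single line (``straightforward thanks to the proposition and the lemma above''), and you have unpacked exactly that --- take expectations termwise via Lemma~\ref{expectation}, apply Proposition~\ref{psr} coordinate by coordinate, and sum. Your added remarks on the simplex constraint (needed so that strict properness carries through to the survival coordinate, which does not appear in $\sum_k L_{k,\zeta}$) and on feasibility of the coordinatewise optimum are useful clarifications that the paper leaves implicit.
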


\begin{proof}
    Straight forward thanks to the proposition and the lemma above.
\end{proof}

\paragraph{Corollary: Proper global scoring rule to compare competing risk models}
The defined scoring rule $\sum_{k=1}^K\mathrm{L}_{k, \zeta}$ is proper on
the time horizon $\zeta$ chosen arbitrarily. To be able to compare
different models, a global measure is necessary, \emph{eg} by summing
over time, as introduced in \citet{graf_assessment_1999}. Here, we extend the Integrated Brier Score to other (strictly) proper scoring rules $l$ and we prove that the Integrated Loss (IL) is also a (strictly) proper scoring rule. \\
By considering:
$$Z \sim \mathcal{U}(0, t_{max})$$ with $t_{max}$ the maximum time horizon for prediction.

\begin{definition}[\emph{Integrated global PSR}]
    With $\ell: \mathbb{R} \times \{0,1\} \rightarrow \mathbb{R}$, a chosen scoring rule, the cause-specific scoring rule function $L_{k, \zeta}:  \mathbb{R} \times \mathcal{D} \rightarrow \mathbb{R}$  defined as above, we define the $\mathrm{IL}$ as
    \begin{flalign}
        \mathrm{IL}(\hat{F}_1(.| \mathbf{x}), ..., \hat{F}_K(.| \mathbf{x}), ( T, \Delta))
        &\defeq \mathbb{E}_{Z}\left[\sum_{k=1}^K \mathrm{L}_{k, Z}(\hat{F}_{k}(Z| \mathbf{x}), ( T, \Delta))| \mathbf{X} = \mathbf{x}\right] \\
        &= \sum_{k=1}^K \underbrace{
        \mathbb{E}_{Z}\left[\mathrm{L}_{k, Z}(\hat{F}_k(Z| \mathbf{x}), ( T, \Delta))| \mathbf{X} = \mathbf{x}\right]
        }_{\defeq \mathrm{IL}_k(\hat{F}_{k}(.| \mathbf{x}), ( T, \Delta))}
    \end{flalign}
\end{definition}

\begin{corollary}
    With $\ell: \mathbb{R} \times \{0,1\} \rightarrow \mathbb{R}$, a chosen (strictly) proper scoring rule, the cause-specific loss function $L_{k, \zeta}:  \mathbb{R} \times \mathcal{D} \rightarrow \mathbb{R}$ defined above $\mathrm{IL}$ is a (strictly) proper scoring rule.
\end{corollary}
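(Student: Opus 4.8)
The plan is to obtain this directly from the fixed-horizon result (Theorem~\ref{gpsr}) by averaging over the auxiliary time variable $Z\sim\mathcal{U}(0,t_{max})$. First I would expand the expected integrated score under the observation law and exchange the two expectations:
\[
\mathbb{E}_{T,\Delta\mid\mathbf{X}=\mathbf{x}}\!\left[\mathrm{IL}(\hat F_1(\cdot\mid\mathbf{x}),\dots,\hat F_K(\cdot\mid\mathbf{x}),(T,\Delta))\right]
=\mathbb{E}_{Z}\!\left[\mathbb{E}_{T,\Delta\mid\mathbf{X}=\mathbf{x}}\Big[\textstyle\sum_{k=1}^{K}\mathrm{L}_{k,Z}(\hat F_k(Z\mid\mathbf{x}),(T,\Delta))\Big]\right],
\]
which is legitimate by Fubini--Tonelli: $Z$ ranges over the bounded interval $(0,t_{max})$ and, for a proper $\ell$ bounded below, the integrands are integrable (for the log-loss one restricts to candidates keeping $\hat F_k(z\mid\mathbf{x})$ bounded away from $0$ and $1$ for a.e.\ $z$, which is exactly the domain on which $\mathrm{IL}$ is real-valued). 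Note that the inner bracket is precisely $L_Z(\cdot,(T,\Delta))$ in the notation of Theorem~\ref{gpsr}, evaluated at horizon $z$.

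Second, I would invoke Theorem~\ref{gpsr} pointwise in $z$: for each fixed $z\in(0,t_{max})$ the inner expectation, viewed as a function of the candidate probabilities $(\hat F_k(z\mid\mathbf{x}))_k$, is minimized at the oracle values $(F^*_k(z\mid\mathbf{x}))_k$. Hence the $z$-integrand evaluated at the candidate dominates the $z$-integrand evaluated at the oracle for every $z$, and monotonicity of $\mathbb{E}_Z$ transfers the inequality to the integrated scores. This is exactly properness of $\mathrm{IL}$, the ``true distribution'' here being the time-indexed family of functions $\zeta\mapsto F^*_k(\zeta\mid\mathbf{x})$, $k=1,\dots,K$.

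Third, for strict properness, assume equality in the integrated inequality. The difference of the two $z$-integrands is a nonnegative measurable function whose $\mathcal{U}(0,t_{max})$-integral is zero, hence it vanishes for Lebesgue-almost every $z\in(0,t_{max})$; at each such $z$, strict properness of $\ell$ (hence of $\mathrm{L}_{k,z}$ by Proposition~\ref{psr}) forces $\hat F_k(z\mid\mathbf{x})=F^*_k(z\mid\mathbf{x})$ for all $k$. Finally, since cumulative incidence functions are right-continuous in time, agreement on a set of full Lebesgue measure in $(0,t_{max})$ upgrades to agreement of the functions on all of $[0,t_{max}]$; the converse implication is immediate. The main obstacle is this last measure-theoretic step --- passing from ``equal for a.e.\ $z$'' to ``equal as functions of time'' --- together with pinning down precisely which object strict properness refers to in the integrated setting (the whole curve, not a single value); the integrability bookkeeping for the log-loss is a secondary technicality.
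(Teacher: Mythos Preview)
Your argument is essentially the paper's own: apply the fixed-horizon properness (Proposition~\ref{psr}/Theorem~\ref{gpsr}) pointwise in $z$ and transfer the inequality to $\mathrm{IL}$ by monotonicity of $\mathbb{E}_Z$. Note only that under the paper's sign convention (Definition~\ref{def:proper}) the oracle \emph{maximizes} the expected score, so your ``minimized'' and ``dominates'' should be reversed; apart from that, your treatment---the Fubini justification and the a.e.-to-everywhere upgrade via right-continuity for the strict case---is actually more careful than the paper's, which simply asserts the result is ``immediate'' from monotonicity without spelling out strictness.
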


\begin{proof}
We have already proven that $L_{k, \zeta}:  \mathbb{R} \times \mathcal{D} \rightarrow \mathbb{R}$ is a (strictly) proper scoring rule. Using the monotonicity /positivity of the expectation, the result is immediate. 

\begin{align}
    \mathbb{E}_{T, \Delta | \mathbf{X}= \mathbf{x}, Z=\zeta} \left[\mathrm{IL}_k(\hat{F}_k(\zeta| \mathbf{x})), ( T, \Delta) \right] &= \mathbb{E}_{T, \Delta | \mathbf{X}= \mathbf{x}, Z=\zeta} \left[\mathrm{L}_k(\hat{F}_k(\zeta| \mathbf{x}), ( T, \Delta))\right] \\
    &\leq \mathbb{E}_{T, \Delta | \mathbf{X}= \mathbf{x}, Z=\zeta} \left[\mathrm{L}_k(F^*_k(\zeta| \mathbf{x}), ( T, \Delta))\right] \\
    &\leq \mathbb{E}_{T, \Delta | \mathbf{X}= \mathbf{x}, Z=\zeta} \left[\mathrm{IL}_k(F^*_k(\zeta| \mathbf{x}), ( T, \Delta))\right]
\end{align}
And because the expectation is non-decreasing, we have:
\begin{align}
    \mathbb{E}_{(\Delta, T)} \left[\mathrm{IL}_k(\hat{F}_k(Z| \mathbf{x}), (T, \Delta))| \mathbf{X}= \mathbf{x}\right] 
    &\leq \mathbb{E}_{(\Delta, T)} \left[\mathrm{IL}_k(F^*_k(Z| \mathbf{x}), (T, \Delta))| \mathbf{X}= \mathbf{x}\right]
\end{align}
This allows us to consider the IL as a global proper scoring rule to compare different competing risks models. 
\end{proof}

\section{The \citet{yanagisawa2023proper} scoring rule for survival}
\label{sec:cen_log_simple}

\citet{yanagisawa2023proper} introduce a metric, called $S_{Cen-log-simple}$, is an approximation of the proper scoring metric in \citet{rindt_survival_2022}.
Indeed, the metric in \citet{rindt_survival_2022} requires the hazard function, the time derivative of the cumulative incidence function, which is exposed only by differentiable models --and hence with an implicit assumption on almost-everywhere smooth time dependence. To avoid requesting this hazard function, \citet{yanagisawa2023proper} approximate it as piecewise affine. They show that under the assumption that the ``node time points'', edges of the affine, parts match an actual piecewise-affine breakdown of the CIF, the resulting approximation is proper. They argue that with enough node time points, the metric is a good approximation of a proper scoring rule.

$S_{Cen-log-simple}$ is defined as: 
\begin{multline}
    S_{Cen-log-simple}( \hat{F} , (t, \delta ); \{\zeta_i\}^B_{i=0}) \defeq \\
    -\delta \sum^{B-1}_{i=0} \mathbb{1}_{\zeta_i < t \leq \zeta_{i+1}} \log(\hat{F}(\zeta_{i+1}) - \hat{F}(\zeta_i)) \\
    - (1 - \delta) \sum_{i=0}^{B- 1} \mathbb{1}_{\zeta_i < t \leq \zeta_{i+1}} \log(1 - \hat{F}(\zeta_{i+1}))
\end{multline}
where $B$ is the number of node time points\footnote{We use $B=32$, as in the experiments in \citet{yanagisawa2023proper}}, and the $\{\zeta_i\}^B_{i=0}$ are the node times points, spaced between $0$ and $t_{max}$ to divide the space into $B$ equal intervals.

\section{Additional results for competing risk experiments}

\subsection{Results on synthetic dataset}%
\label{app:synthetic_results}

Varying the number of training points shows a slow improvement of SurvTrace, but at $n=5\cdot10^4$ MultiIncidence still has the best IBS (\autoref{ipsr}). 
MultiIncidence also maintains its benefit with an increased censoring rate (\autoref{censo}). In terms of computation time, MultiIncidence is the fastest, but the dependence on the number of features is similar across MultiIncidence, Fine \& Gray, and SurvTRACE (\autoref{ipsrvstime}).

\paragraph{Integrated Brier Score with a varying number of points}
By varying the number of training points in our synthetic dataset, while the Oracle Integrated Brier Score is decreasing, we see in Figure \ref{ipsr} that our method obtains better results than the transformer (SurvTRACE) in particular for a smaller number of training points. The number of training points may be a huge bottleneck for medical studies, as the number of patients may be low. We also see that, as the number of training points increases, SurvTRACE improves. With too many points, here 20,000, the Fine \& Gray model was too long to run. We also see that the Fine \& Gray model achieves approximately the same performance as our model, as expected because we model linear relations between the targets and the features.

\begin{figure}
\begin{minipage}{.25\linewidth}
\caption{\textbf{Integrated Brier Score (IBS) vs Training Samples on Synthetic Dataset} Integrated Brier Score for the synthetic dataset with linear relation over the features when we vary the number of samples. The test set was made into five different seeds.%
}%
\label{ipsr}
\end{minipage}%
\hfill%
\begin{minipage}{.74\linewidth}
    \includegraphics[width=\columnwidth]{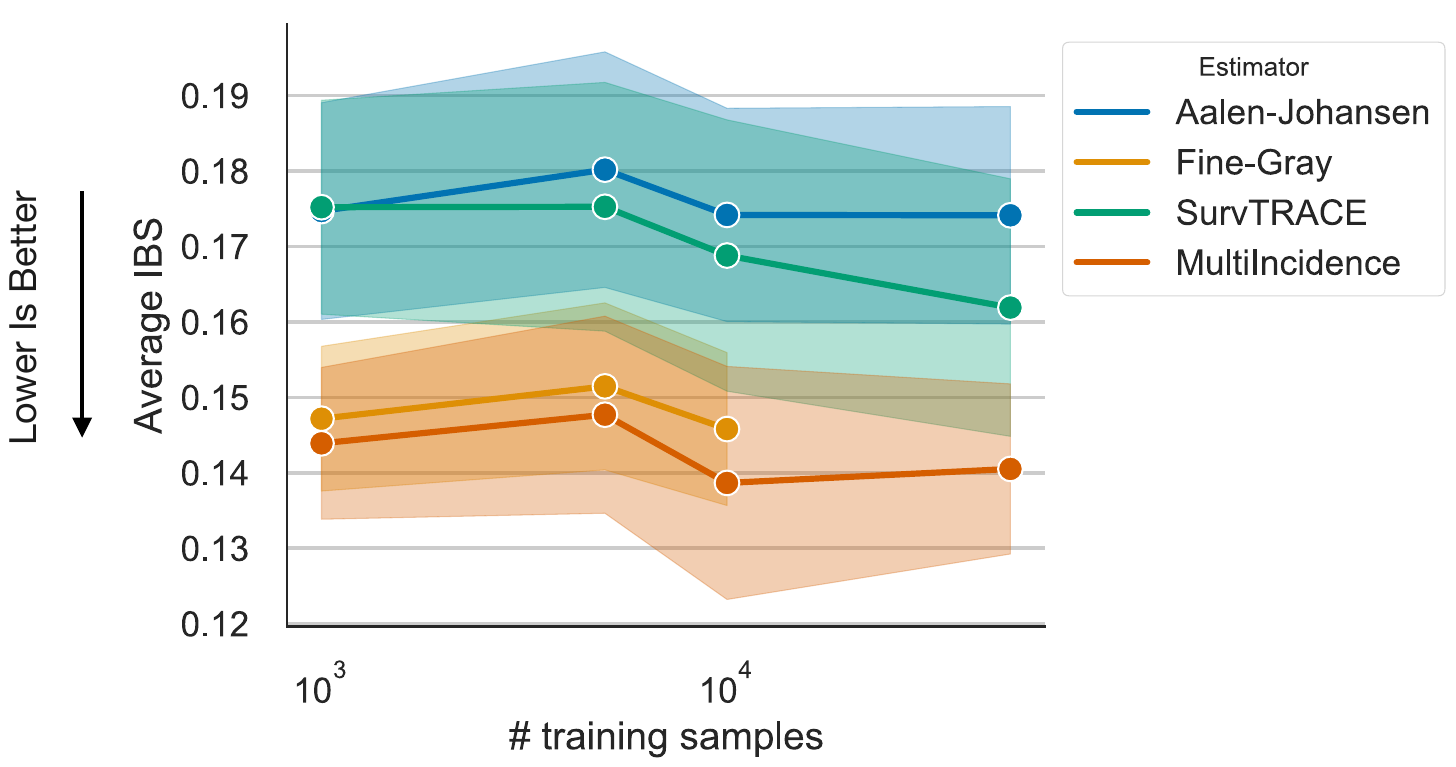}
\end{minipage}
\end{figure}

\paragraph{Computational cost vs performances}
To emphasize this phenomenon, we measured the time to fit each model, while varying the number of samples and the number of features in Figure \ref{fig:fit_predicttime_carre}. We show that for a limited number of samples, all of the methods take approximately the same amount of time to fit while having the worst results for SurvTRACE. With a higher number of samples, our method was faster to train than the other ones while achieving the same performance. We did not obtain the results for the Fine \& Gray model because the time to fit was higher than the given budget. 

\begin{figure}[b]
    \begin{minipage}{.25\linewidth}
	\caption{\textbf{Fitting time vs number of features} Time to fit 10,000 samples depending on the number of features.
	}
	\label{ipsrvstime}
    \end{minipage}%
    \hfill%
    \begin{minipage}{.74\linewidth}
	\includegraphics[width=\columnwidth]{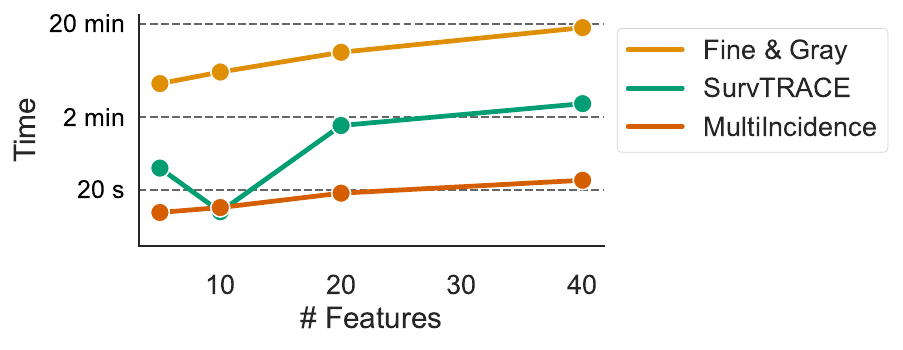}
    \end{minipage}%
\end{figure}

We show the dependence of time to fit with the number of features in Figure \ref{ipsrvstime}. In this figure, we highlight that our method takes less time to fit; the increase in time to fit with the number of features is similar among all methods. Another study of the impact of the features and the number of samples to fit the models can be found in Appendix \ref{fig:fit_predicttime}. 

\paragraph{Censoring Scale}
We studied the impact of censoring on the different models. To do so, we vary the censoring distribution to understand the effect of the learning scheme. In Figure \ref{censo}, we see that our method outperforms SurvTRACE at different censoring rates. As expected, all models get worse as the censoring rate increases.

\begin{figure}[t]
    \begin{minipage}{.25\linewidth}
	\caption{\textbf{Integrated Brier Score vs Censoring Rate} Integrated Brier Score for the synthetic dataset with 10,000 training points when we vary the censoring rate. Shaded areas represent the standard deviation across the different seeds. We used the Oracle censoring distribution to compute the weights
	\label{censo}}
    \end{minipage}%
    \hfill%
    \begin{minipage}{.74\linewidth}
	\includegraphics[width=\linewidth]{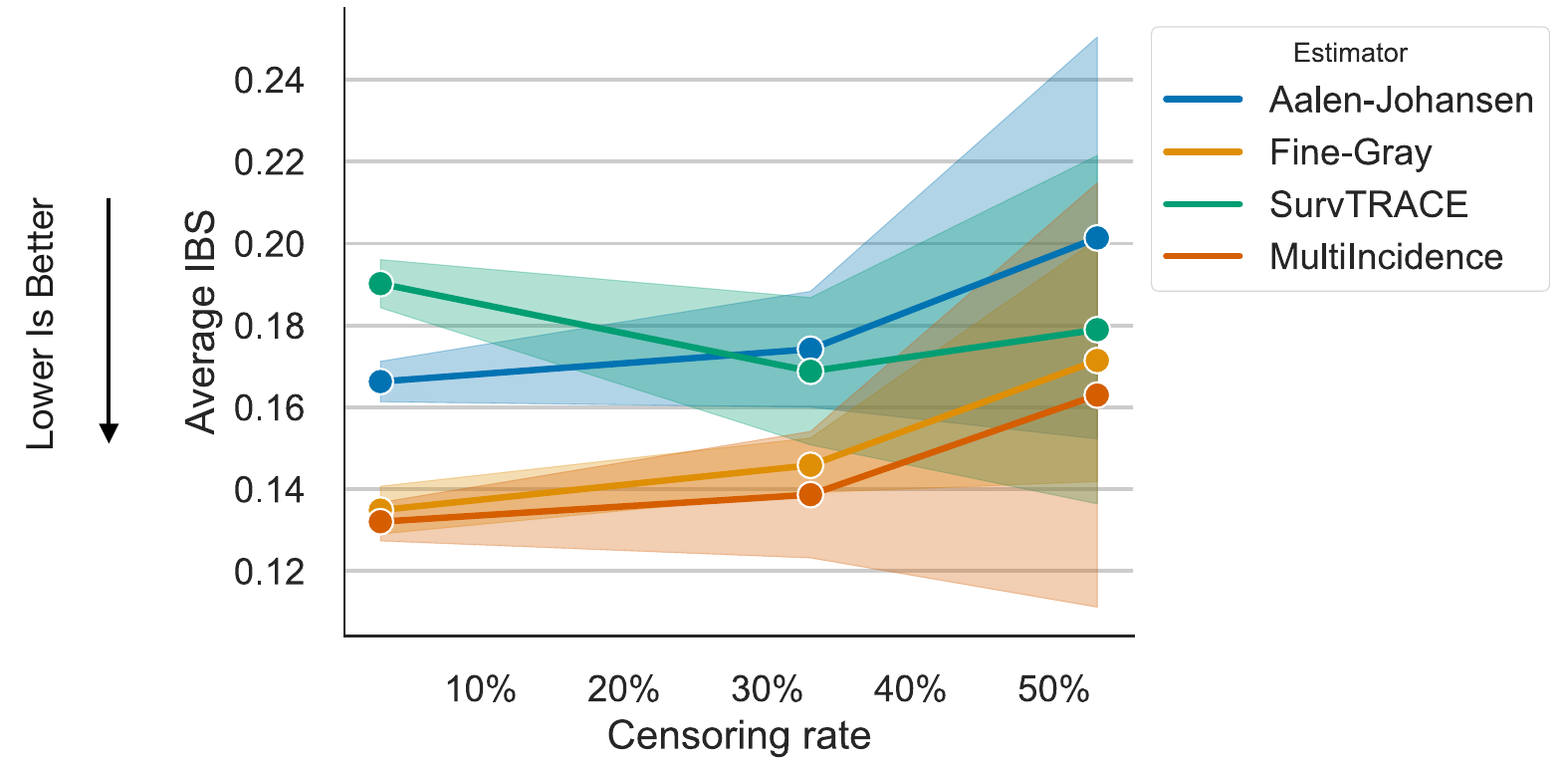}
    \end{minipage}%
\end{figure}

\paragraph{Brier Score in time}
We compared the Brier Score over time for each model, as shown in Figure \ref{bs}. The Brier Score increases over time for all models, which is expected due to the smaller number of individuals toward the end. Additionally, the associated weights contribute significantly to errors at later times. In this context, MultiIncidence consistently outperforms every other model for each event. 
\begin{figure}[t!]
\begin{minipage}{.25\linewidth}%
\caption{\textbf{Brier Score in time} Evolution of the Brier Score for the synthetic dataset \ref{synthedata} with 20,000 training points with 50\% of censoring. The weights are computed with the Oracle censoring distribution.}
\label{bs}
\end{minipage}%
\hfill%
\begin{minipage}{.74\linewidth}%
    \includegraphics[width=.9\linewidth]{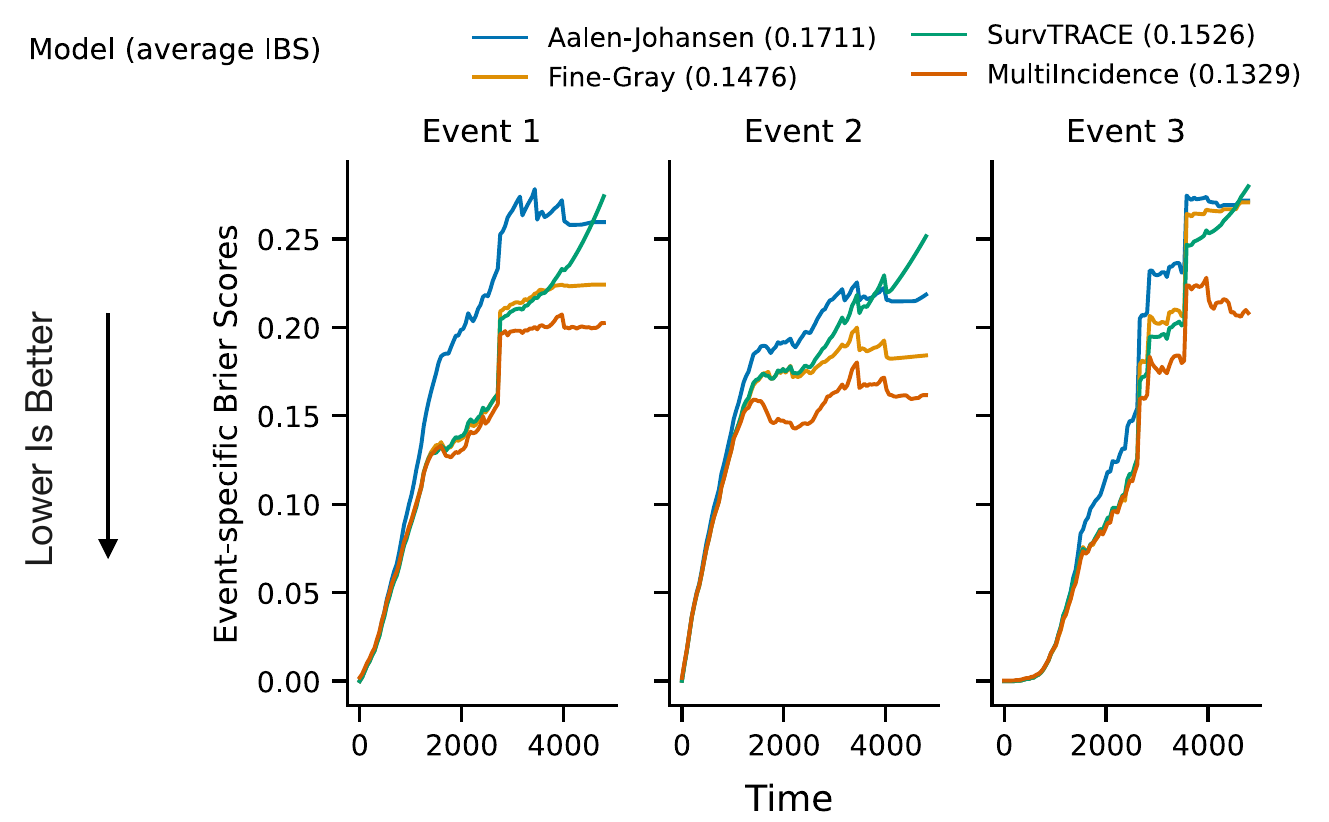}
\end{minipage}%
\end{figure}

\paragraph{Impact of the number of features and the training samples on fit time of competing risks}
\begin{figure}[ht]
    \centering
    \includegraphics[width=.8\columnwidth]{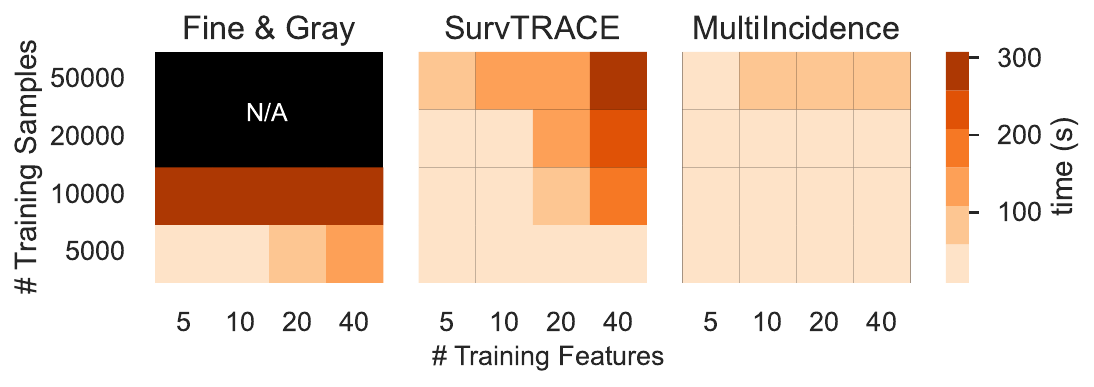}
    \caption{\textbf{Fit time for competing risks models}. We have measured the time to fit for each of them depending on the number of training points and the number of features.}
    \label{fig:fit_predicttime_carre}
\end{figure}

\begin{figure}[ht]\label{survtime}
\centerline{\includegraphics[width=\columnwidth]{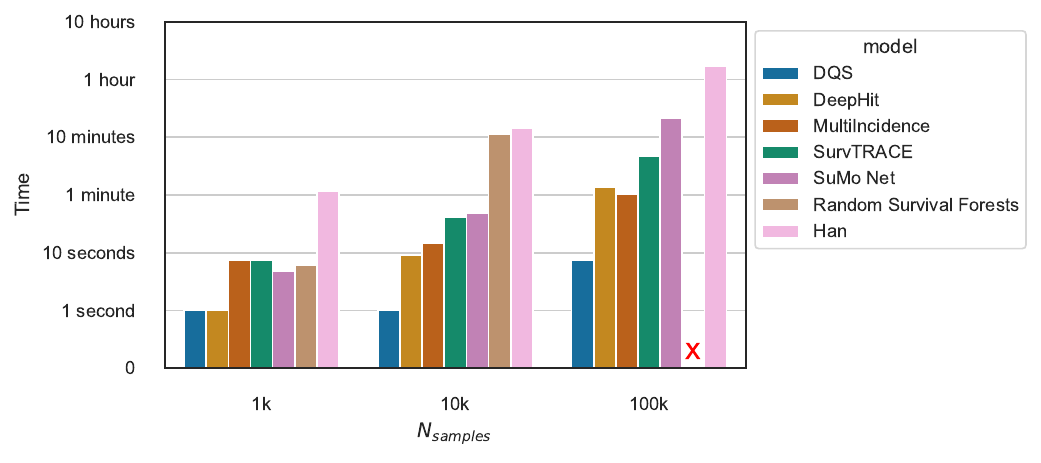}}
\caption{\textbf{Synthetic Dataset, training time for survival} Time to fit each survival method while varying the number of samples generated.}
\end{figure}

\subsection{Results for the SEER Dataset}%
\label{app:seer_results}

\paragraph{Learning curves}

We ran the experiments while varying the number of training points. In doing so, we measured the KM-adjusted Integrated Brier Score for each event. We also average it to have one global metric. We see in Figure  \ref{fig:ibs_seer} that our model of the global evaluation metric is quite stable and lower than the average Integrated Brier Score on SurvTRACE for any number of training points. We expanded the Integrated Brier Score for each event while training on the whole dataset except for the Random Survival Forests we trained with 100k data points and Fine and Gray with 10k data points because the last two methods could not handle such an amount of data. In Table \ref{tab:ibs_event_seer}, we compare our method with the other models. We see that our model MultiIncidence outperforms the other methods. Furthermore, figure \ref{fig:tradeoff_competing} shows that the models with the best average IBS are also the fastest to train.

\begin{figure}[b]
    \begin{minipage}{.3\linewidth}
    \caption{\textbf{Integrated Brier Score vs Number of Training Samples: SEER} Integrated Brier Score (Lower is Better) on the SEER dataset varying the number of samples: 50,000 samples, 100,000, and the full Training Dataset, aside for the Fine\&Gray model, which was tractable only for 10,000 samples.
    \label{fig:ibs_seer}}
    \end{minipage}%
    \hfill%
    \begin{minipage}{.69\linewidth}
	\includegraphics[width=\linewidth]{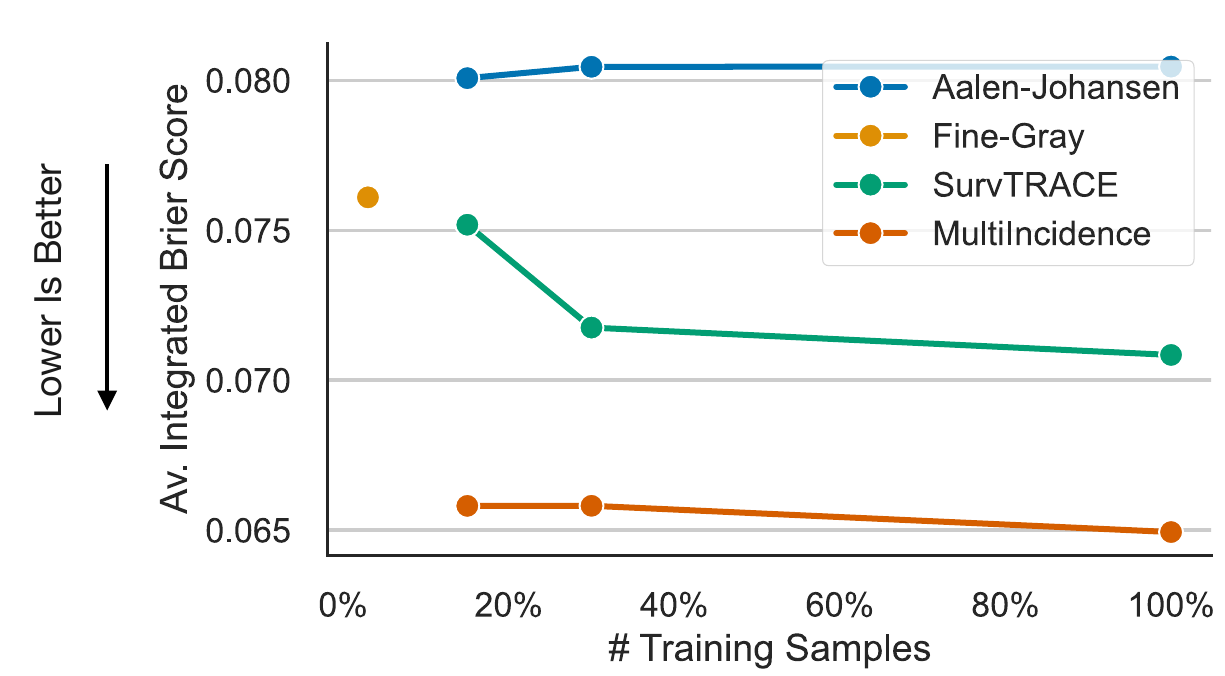}
    \end{minipage}%
\end{figure} 

\begin{table}
\begin{center}
\caption{Integrated Brier Score for each cause-specific risk on the SEER Dataset (Lower is Better). 
\label{tab:ibs_event_seer}}
\small\sc
\begin{tabular}{l|rrr}
\toprule
Event & 1 & 2 & 3 \\
\midrule
Aalen-Johansen & 0.1209 & 0.2832 & 0.0834\\
Fine \& Gray & 0.1055 & \underline{0.0281} & 0.0822 \\
Random Survival Forests& \textbf{0.0825}& 0.0295&0.0803 \\
DeepHit& 0.0931& 0.0330&0.0831 \\
DSM & 0.0875 & 0.0310 & 0.0869 \\
DeSurv & 0.0975 & 0.0327 & 0.0869 \\
SurvTRACE & 0.0871 & 0.0287 & \underline{0.0800} \\
MultiIncidence & \underline{0.0832} & \textbf{0.0273} & \textbf{0.0757} \\
\bottomrule
\end{tabular}
\end{center}
\end{table}

\paragraph{$C_{\zeta}$-index}

The $C$-index measures whether the ranking of the risk of the different
samples is in agreement with the order of the times in which the
event of interest happens\citep{harrell}. It
is originally a metric for survival settings but is often adapted to
competing risks settings where it is applied independently to each event \citep{uno_cstatistics_2011}.
In such settings, it is biased and does not control for the probabilities of
the events. However, as it is a popular metric, we have included it in
our experiments.

We give tables below for the $C_\zeta$-index toward time for the three
events \ref{tab:results_seer}. 
 At a fixed time horizon $\zeta$, we compute the $C_\zeta$-index for each class (corresponding to the ROC-AUC where we handled censored observations). The time horizons $\zeta$ are selected based on the any-event distribution, representing quantiles, indicating that at the time corresponding to 0.25, 25\% of events have already occurred. 
These results differ from those in the
SurvTRACE paper \citep{wang_survtrace_2022} for two reasons: \emph{1)} The
available code online only implements one of their losses, \emph{2)} they treated the SEER dataset with two competing risks, and any other event was classified as censored, instead of collapsing them in a third competing event.

\begin{table}[ht]
\begin{center}
\caption{C-index for competing risks on the SEER Dataset (Higher is Better)
\label{tab:results_seer}}
\small\sc
\begin{tabular}{l||ccc||ccc||ccc}
\toprule
Time-horizon quantile & & 0.25 & & & 0.50 & & & 0.75 & \\
\midrule
Event & 1 & 2 & 3 & 1 & 2 & 3 & 1 &2 &3 \\
\midrule
Aalen Johansen & 0.50 & 0.50 & 0.50 & 0.50 & 0.50 & 0.50 & 0.50 & 0.50 & 0.50 \\
Fine \& Gray & 0.80 & 0.67 &0.67 & 0.77 & 0.67 &0.69 &0.76 & 0.68 & 0.71 \\
Random Survival Forests & \textbf{0.89} & 0.79 & 0.79 & \textbf{0.87} & \textbf{0.78} & \textbf{0.77} & \textbf{0.85} & \textbf{0.77} & \textbf{0.77} \\
DeepHit & 0.83 & \textbf{0.86} & \textbf{0.85} & 0.75 &
0.75 & \underline{0.75} & 0.73 & 0.75 & \underline{0.75} \\
DSM & \underline{0.88} & \underline{0.85} & \underline{0.84} & 0.77 & 0.74 & \underline{0.75} & 0.76 & 0.75 & \underline{0.75} \\
DeSurv & 0.83 & 0.82 & 0.81 & 0.72 & 0.70 & 0.71 & 0.74 & 0.73 & 0.73 \\
SurvTRACE &  \underline{0.88} & 0.78& 0.77 &\underline{0.86} &\underline{0.76} &\underline{0.75} & \underline{0.84} &\underline{0.76} &\underline{0.75}\\
MultiIncidence & \underline{0.88} & 0.79 & 0.77 &
0.85 & 0.72 &0.71 & 0.81 & 0.66 &0.62\\
\bottomrule
\end{tabular}
\end{center}
\end{table}

\section{Additional results for survival experiments}

\subsection{Metrics for the survival analysis}\label{saall}
\begin{table}[ht]
\caption{\textbf{METABRIC}: Integrated Brier Score, $S_{Cen-log-simple}$ and c-index at 50\%}
\label{tab:metabric_supp}
\begin{center}
\begin{small}
\begin{sc}
\begin{tabular}{l|rrrrr}
\toprule
Model& C-index 0.25&C-index. 0.5&C-index 0.75&IBS & $S_{Cen-log-simple}$ \\
\midrule
Random Survival Forests&0.502±0.009&0.483±0.027&0.502±0.021&0.197±0.025&2.442±0.044\\
DeepHit&0.525±0.041&\textbf{0.639±0.024}&0.613±0.016&0.180±0.014&2.271±0.019\\
PCHazard&0.595±0.088&\textbf{0.639±0.019}&\textbf{0.639±0.014}&0.176±0.014&2.246±0.046\\
Han&0.626±0.035&0.622±0.007&0.628±0.006&0.191±0.003&2.420±0.150\\
DQS&0.601±0.019&0.630±0.032&0.633±0.014&0.180±0.034&\underline{2.205±0.044}\\
SuMo net&\textbf{0.660±0.022}&0.634±0.017&0.589±0.015&\underline{0.169±0.009}&2.302±0.059\\
SurvTRACE&0.589±0.082&0.627±0.015&0.629±0.007&\textbf{0.168±0.011}&2.270±0.034\\
MultiIncidence&\underline{0.627±0.016}&\underline{0.636±0.015}&\underline{0.635±0.011}&\textbf{0.168±0.019}&\textbf{2.169±0.056}\\ 
\bottomrule
\end{tabular}
\end{sc}
\end{small}
\end{center}
\end{table}

\begin{table}[ht]
\caption{\textbf{SUPPORT}: Integrated Brier Score and $S_{Cen-log-simple}$ (Lower is Better)}
\label{tab:support_supp}
\begin{center}
\begin{small}
\begin{sc}
\begin{tabular}{l|rrrrr}
\toprule
Model& C-index 0.25 & C-index 0.50& C-index 0.75 & IBS&$S_{Cen-log-simple}$ \\
\midrule
Random Survival Forests&0.481±0.024&0.527±0.019&0.531±0.020&0.225±0.004&1.942±0.023\\
DeepHit& 0.449±0.041 & \underline{0.609±0.004} & 0.599±0.003 & 0.217±0.005 & 2.251±0.021\\
PCHazard & 0.585±0.014&0.584±0.014&0.584±0.016&0.210±0.007&2.192±0.024\\
Han&0.576±0.016&0.574±0.007&0.587±0.011&0.260±0.012&3.483±0.307\\
DQS&\textbf{0.601±0.019}&0.598±0.012&0.592±0.009&0.201±0.007&1.987±0.069\\
SuMo net&\underline{0.590±0.016}&0.589±0.016&0.589±0.015&\underline{0.194±0.010}&\textbf{1.721±0.016}\\
SurvTRACE&0.578±0.008&\underline{0.609±0.005}&\underline{0.610±0.006}&\underline{0.194±0.005}&1.870±0.018\\
MultiIncidence&0.572±0.019&\textbf{0.618±0.007}&\textbf{0.615±0.007}&\textbf{0.191±0.006}&\underline{1.740±0.020}\\
\bottomrule
\end{tabular}
\end{sc}
\end{small}
\end{center}
\end{table}

\subsection{Trade-off between training time and performances}\label{tradeoffyana}
Here, we provide the results of our analysis of training time with the performances on the $S_{Cen-log-simple}$ of the different models for the survival analysis.

\begin{figure}[ht]
    \centering
    \includegraphics[width=0.8\columnwidth]{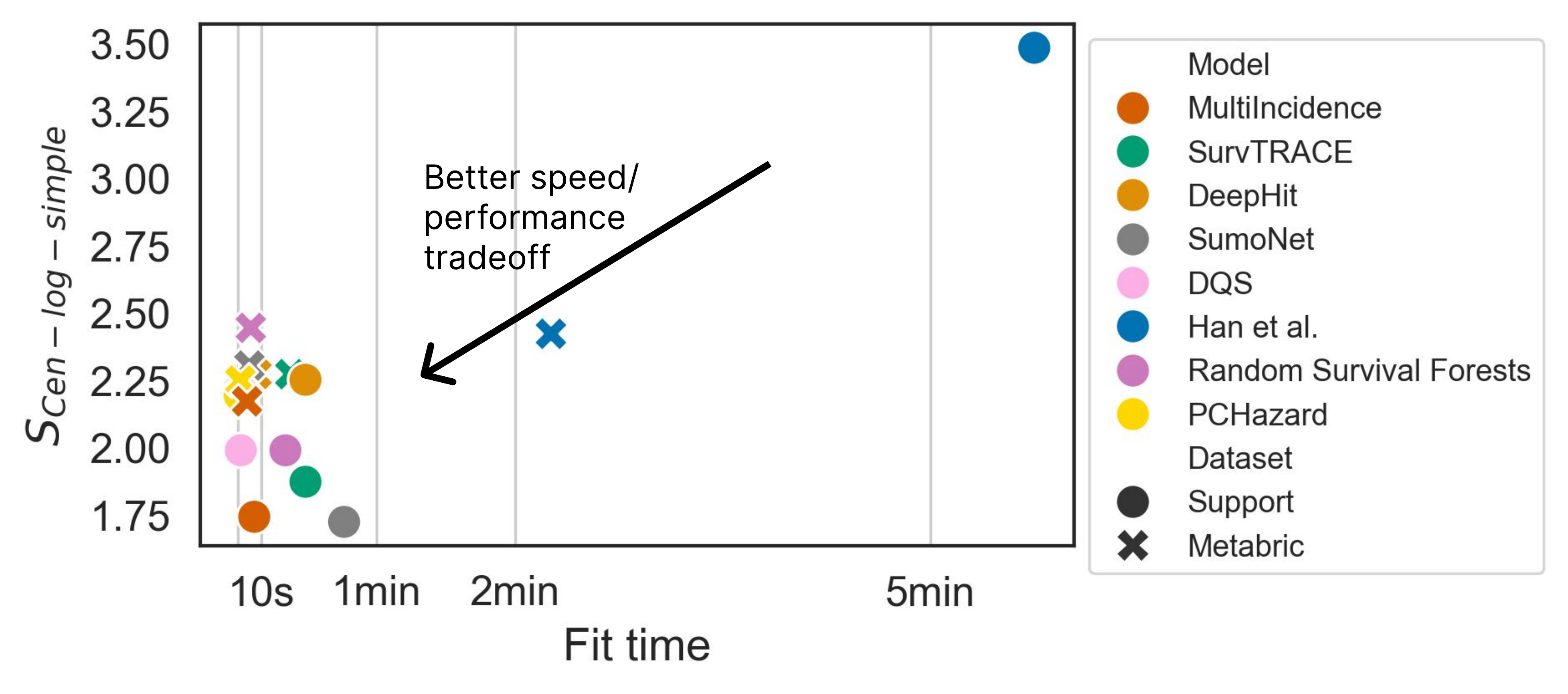}
    \caption{Trade-off between the performances and the training time for the $S_{Cen-log-simple}$ for the survival model over METABRIC and SUPPORT}
    \label{fig:fit_predicttime}
\end{figure}

\section{Implementation Details}

\subsection{Reference of used implementations for baselines}

We compare MultiIncidence with several baselines and describe their main characteristics and the implementation used in Table~\ref{tab:implem}

\begin{table}[ht]
\centering
\caption{Characteristics of used baselines.}
\begin{tabular}{|l|c|c|p{5cm}|p{3cm}|} \toprule
Name & \makecell{Competing\\ risks} & \makecell{Proper\\loss} & Implementation & Reference \\
\midrule
SurvTRACE & \checkmark & & ours & \citet{wang_survtrace_2022} \\ \midrule
DeepHit & \checkmark & & \rurl{github.com/havakv/pycox} & \citet{lee_deephit_2018}\\\midrule
DSM & \checkmark & & \rurl{autonlab.github.io/DeepSurvivalMachines} & \citet{nagpal2021deep}\\\midrule
DeSurv & \checkmark & & \rurl{github.com/djdanks/DeSurv}& \citet{danks_derivative-based_2022} \\\midrule
\makecell{Random Survival\\Forests} & \checkmark & & \rurl{scikit-survival.readthedocs.io/} for survival, and \rurl{www.randomforestsrc.org/} for competing risks & \citet{ishwaran_random_2008,ishwaran2014random} \\\midrule
Fine \& Gray & \checkmark & & \rurl{cran.r-project.org/package=cmprsk} & \citet{fine_proportional_1999}\\\midrule
Aalen-Johansen & \checkmark & & ours & \citet{aalen_survival_2008} \\\midrule
Han et al. & & & \rurl{github.com/rajesh-lab/Inverse-Weighted-Survival-Games} & \citet{han2021inverse}\\\midrule
PCHazard & & & \rurl{github.com/havakv/pycox} & \citet{kvamme_continuous_2019}\\\midrule
SumoNet & & \checkmark & \rurl{github.com/MrHuff/Sumo-Net} & \citet{rindt_survival_2022} \\\midrule
DQS & & \checkmark & \rurl{ibm.github.io/dqs/} & \citet{yanagisawa2023proper}\\
\bottomrule
\end{tabular}
\label{tab:implem}
\end{table}

\subsection{GridSearch Parameters}
We ran a Randomized Search for those parameters with a budget of 30. There are no parameters to tune for Aalen-Johansen and Fine \& Gray.
\begin{table}[ht]
    \centering
    \caption{Randomized Search Parameters}
    \begin{tabular}{|l|l||c|}
    \toprule
    Estimator & Parameter & Range \\
    \midrule
    MultiIncidence & Learning Rate & $loguniform(0.01, 0.5)$ \\
    & Nb of iterations & $\llbracket 20, 200 \rrbracket$ \\
    & Maximum Depth & $\llbracket 2, 10 \rrbracket$ \\
    & Nb of times & $\llbracket 1, 5 \rrbracket$ \\
    \midrule
    SurvTRACE & Learning Rate & $loguniform(10^{-5}, 10^{-3})$\\
    & Batch Size & $\{256, 512, 1024\}$\\
    & Hidden parameter & $\{2, 3\}$ \\ 
    \bottomrule
    \end{tabular}
    \label{tab:gridsearch}
\end{table}

\section{Distribution of the competing risks datasets}
\subsection{SEER Distribution of events}
Here, we present the distributions of the event of the SEER Dataset. We can highlight that the censoring distribution is non-uniform in time. The change in the censoring distribution from the $48^{th}$ month may be difficult to learn for some methods.
\begin{figure}[ht]
    \includegraphics[width=0.8\columnwidth]{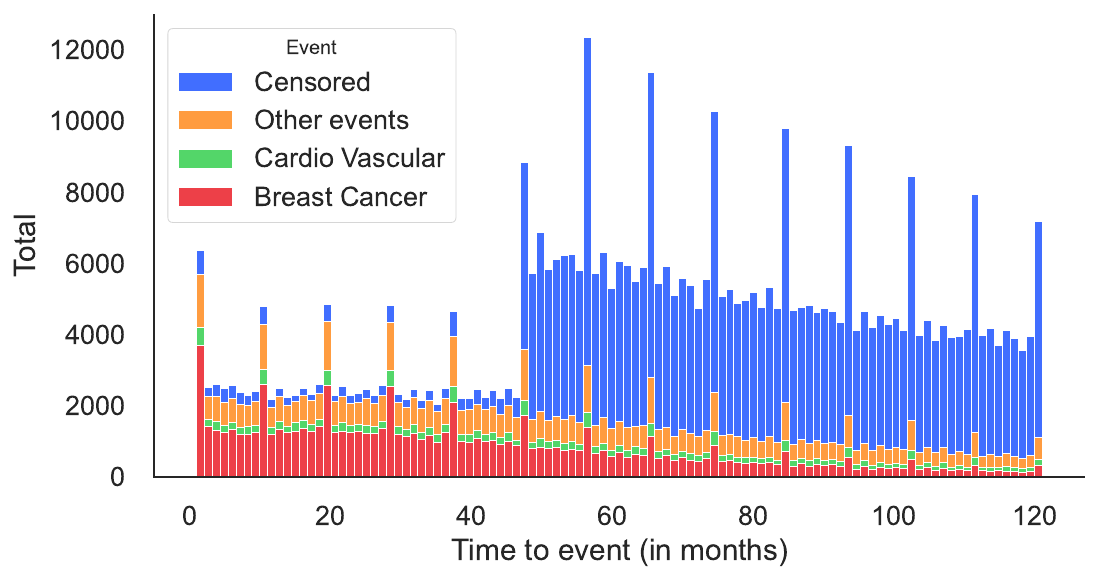}
    \caption{\textbf{SEER Dataset Distributions} The censoring rate is around 63\%. The prevalence of the different events is 18\% for Breast Cancer, 4.5\% for Cardio Vascular events, and 10\% for other events.}
    \label{fig:seer}
\end{figure}

\subsection{Example of distribution of one synthetic dataset}\label{synthedata}
\begin{figure}[ht]
\centerline{\includegraphics[width=0.8\columnwidth]{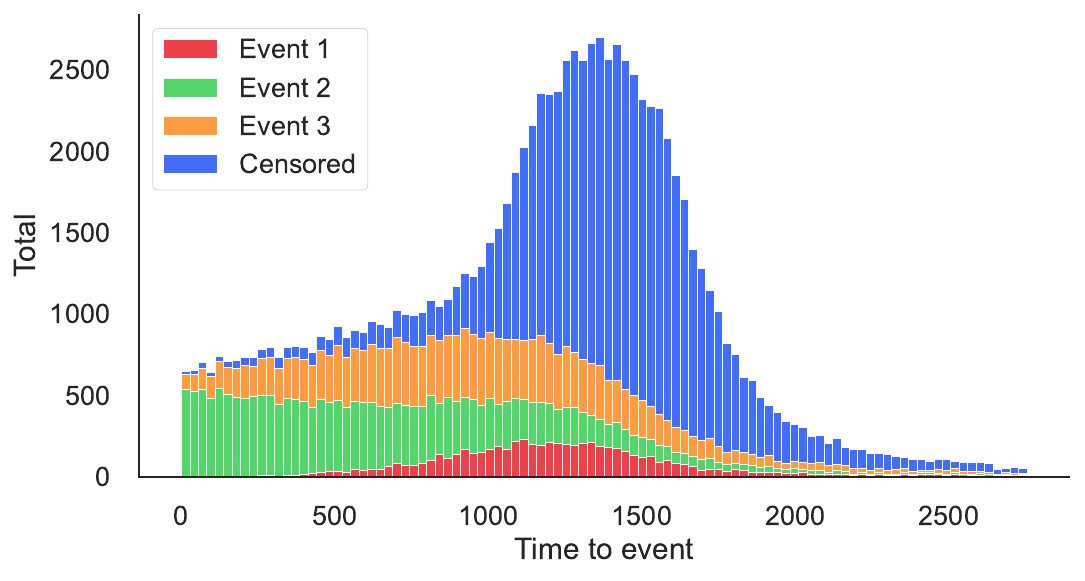}}
\caption{\textbf{Synthetic Dataset Distributions} Duration distributions of the synthetic dataset when censoring is dependent on X, censoring rate 69\%. The events are stacked.}
\label{distrib_synthe}
\end{figure} 

Figure \ref{distrib_synthe} shows an example of the distribution of the events with the censoring (dependent on the covariates). The parameters are chosen to fit three different behaviors possible. To illustrate this distribution, we can think of truck maintenance. Event 1, happening during the whole period duration, corresponds to the driver's driving skills. Event 2 may correspond to a misconception of the truck, happening from the beginning. Event 3 will refer to the truck's wear and tear. 

\newpage
\section{Corollary: Bregman divergence}
Here, we propose another proof with a scoring rule in the form of a Bregman Divergence. A Bregman divergence is a form of distance, and because of that, we want to minimize the Bregman divergence.

\begin{definition}
    Considering $U: \mathbb{R}^d \rightarrow \mathbb{R}$ strictly convex and differentiable, 
    \begin{flalign}
        \text{Bregman divergence} && 
        D_U(p, q) = U(p)-U(q)-\langle \nabla U(q), p-q\rangle. \geq 0
        && 
    \end{flalign}
\end{definition}
The specific choice of $l$ as $D_U$ does not change any computations of the expectation, so we obtain: 
\begin{flalign*}
    \mathbb{E}_{T, \Delta |\mathbf{X} =\mathbf{x}}\left(\mathrm{L}_{k, \zeta}\left(\hat{F}_k(\zeta|\mathbf{x}), ( T, \Delta)\right)\right) &= D_U\left(0, \hat{F}_k(\zeta|\mathbf{x})\right)
        (1 - F^*_k(\zeta |\mathbf{x}))+
    D_U\left(1, \hat{F}_k(\zeta|\mathbf{x})\right)
         F^*_k(\zeta |\mathbf{x}) \\ 
    &= (U(0)-U(\hat{F}_k(\zeta|\mathbf{x}))+\langle \nabla U(\hat{F}_k(\zeta|\mathbf{x})),\hat{F}_k(\zeta|\mathbf{x})\rangle)(1-F^*_k(\zeta |\mathbf{x})) \\ &\hspace{0.8cm}+  (U(1)-U(\hat{F}_k(\zeta|\mathbf{x}))-\langle \nabla U(\hat{F}_k(\zeta|\mathbf{x})), 1-\hat{F}_k(\zeta|\mathbf{x})\rangle)F^*_k(\zeta |\mathbf{x})  
    \\
    &= U(1)F^*_k(\zeta |\mathbf{x}) + U(0)(1-F^*_k(\zeta |\mathbf{x})) - U(\hat{F}_k(\zeta|\mathbf{x})) \\
    & \hspace{3cm}+ \langle \nabla U(\hat{F}_k(\zeta|\mathbf{x})), \hat{F}_k(\zeta|\mathbf{x}) - F^*_k(\zeta |\mathbf{x}) \rangle
\end{flalign*}
Meanwhile, because $U$ is strictly convex and differentiable: 
\begin{align}
    \forall p, \hat{p}, ~~~U(p) &> U(\hat{p}) + \langle \nabla U(\hat{p}), p - \hat{p} \rangle \\
    -U(\hat{p}) + \langle \nabla U(\hat{p}), \hat{p} - p \rangle &> -U(p) 
\end{align}

This implies: 
\begin{align*}
    \mathbb{E}_{T, \Delta |\mathbf{X} =\mathbf{x}}\left(\mathrm{L}_{k, \zeta}\left(\hat{F}_k(\zeta|\mathbf{x}), ( T, \Delta)\right)\right) &=D_U\left(0, \hat{F}_k(\zeta|\mathbf{x})\right)
        (1 - F^*_k(\zeta |\mathbf{x}))+
    D_U\left(1, \hat{F}_k(\zeta|\mathbf{x})\right)
         F^*_k(\zeta |\mathbf{x}) \\
    &> U(1)F^*_k(\zeta |\mathbf{x}) + U(0)(1-F^*_k(\zeta |\mathbf{x})) - U(F^*_k(\zeta |\mathbf{x})) \\
    &> D_U\left(0, F^*_k(\zeta|\mathbf{x})\right)
        (1 - F^*_k(\zeta |\mathbf{x}))+
    D_U\left(1, F^*_k(\zeta|\mathbf{x})\right)
         F^*_k(\zeta |\mathbf{x}) \\
    &> \mathbb{E}_{T, \Delta |\mathbf{X} =\mathbf{x}}\left(\mathrm{L}_{k, \zeta}\left(F^*_k(\zeta|\mathbf{x}), ( T, \Delta)\right)\right)
\end{align*}

We obtain that, a negative Bregman Divergence leads to a strictly proper scoring rule. 

\section{Examples}
\subsection{Brier Score}
When we define $l(y, \hat{y}) \defeq (y - \hat{y})^2$, we obtain the censoring adjusted Brier score for the $k^{th}$ competing event as define in Eq. 14 of \cite{kretowska_tree-based_2018}:

\begin{definition}
\begin{multline}
    \forall \zeta, \forall k \in [1, K], \\
    \mathrm{BS}_k(\hat{F}_k(\zeta, \mathbf{x}), \delta, t, \zeta, \mathbf{x}) \defeq \frac{1}{n} \sum_{i=1}^n 
    \dfrac{
        \mathbb{1}_{t_i \leq \zeta, \delta_i = k}\left(1 - \hat{F}_k(\zeta|\mathbf{x}_i)\right)^2
        }
        {G^*(t_i|\mathbf{x}_i)}
        +
        \dfrac{
        \mathbb{1}_{t_i > \zeta} \left(\hat{F}_k(\zeta|\mathbf{x}_i)\right)^2
        }
        {G^*(\zeta|\mathbf{x}_i)} 
     \\
    + 
        \dfrac{
        \mathbb{1}_{t_i \leq \zeta, \delta_i \neq 0, \delta_i \neq k} \left(  \hat{F}_k(\zeta|\mathbf{x}_i)\right)^2
        }
        {G^*(t_i|\mathbf{x}_i)}
\end{multline}
\end{definition}

\subsection{Binary cross entropy loss} 
As it is explained in \citet{benedetti_scoring_2010}, the log loss captures better the uncertainty than the mean squared error. So, one could also evaluate survival and competing risks models with the following loss.

\begin{multline}
    \forall k \in [1, K], \\
    \mathrm{l}_k(\hat{F}_k(\zeta, \mathbf{x}), \delta, t, \zeta)\defeq \frac{1}{n} \sum_{i=1}^n 
    \dfrac{
        \mathbb{1}_{t_i \leq \zeta, \delta_i = k}\log\left( \hat{F}_k(\zeta|\mathbf{x}_i)\right)
        }
        {G^*(t_i|\mathbf{x}_i)}
        + 
        \dfrac{
        \mathbb{1}_{t_i \leq \zeta, \delta_i \neq 0, \delta_i \neq k} \log\left(1 -  \hat{F}_k(\zeta|\mathbf{x}_i)\right)
        }
        {G^*(t_i|\mathbf{x}_i)} \\ 
        + 
        \dfrac{
        \mathbb{1}_{t_i > \zeta} \log\left(1 - \hat{F}_k(\zeta|\mathbf{x}_i)\right)
        }
        {G^*(\zeta|\mathbf{x}_i)}
\end{multline}

\end{document}